\documentclass{article}

\usepackage[preprint]{neurips_2026}

\usepackage[utf8]{inputenc} 
\usepackage[T1]{fontenc}    
\usepackage{hyperref}       
\usepackage{url}            
\usepackage{booktabs}       
\usepackage{amsfonts}       
\usepackage{nicefrac}       
\usepackage{microtype}      
\usepackage[table]{xcolor}         
\usepackage{amsmath}
\usepackage{mathtools,amssymb,amsthm}
\usepackage{thm-restate}
\usepackage{enumitem}
\usepackage{algorithm}
\usepackage{algpseudocode}
\usepackage{adjustbox}
\usepackage{tikz}
\usetikzlibrary{backgrounds,calc,positioning,arrows.meta,decorations.pathreplacing}
\usepackage{longtable}
\usepackage{array}
\usepackage{tabularx}
\usepackage{array}
\usepackage{booktabs}
\usepackage{float}
\usepackage{multirow}

\newcommand{\hypbox}[1]{%
  \begingroup
  \setlength{\fboxsep}{1.5pt}%
  \colorbox{hypbg}{\textcolor{hypfg}{#1}}%
  \endgroup
}

\newcommand{\anabox}[1]{%
  \begingroup
  \setlength{\fboxsep}{1.5pt}%
  \colorbox{anabg}{\textcolor{anafg}{#1}}%
  \endgroup
}

\newcommand{\solbox}[1]{%
  \begingroup
  \setlength{\fboxsep}{1.5pt}%
  \colorbox{solbg}{\textcolor{solfg}{#1}}%
  \endgroup
}

\definecolor{headergray}{RGB}{242,244,247}
\definecolor{hypbg}{RGB}{247,242,255}
\definecolor{hypfg}{RGB}{110,72,170}
\definecolor{anabg}{RGB}{240,247,255}
\definecolor{anafg}{RGB}{43,103,176}
\definecolor{solbg}{RGB}{241,250,245}
\definecolor{solfg}{RGB}{44,128,96}

\definecolor{ourfill}{HTML}{EFE7F9}
\definecolor{ourhint}{HTML}{DCCBF2}
\definecolor{ourline}{HTML}{6D4FB0}
\definecolor{ourtext}{HTML}{4A2E86}
\definecolor{bandcol}{HTML}{F6F2FC}

\newcommand{\hintbox}[1]{%
  \begingroup
  \setlength{\fboxsep}{1.5pt}%
  \colorbox{blue!10}{\textcolor{blue!55!black}{#1}}%
  \endgroup
}

\newcommand{\deploybox}[1]{%
  \begingroup
  \setlength{\fboxsep}{1.5pt}%
  \colorbox{green!12}{\textcolor{green!45!black}{#1}}%
  \endgroup
}

\newcommand{\evalbox}[1]{%
  \begingroup
  \setlength{\fboxsep}{1.5pt}%
  \colorbox{black!4}{\textcolor{black!75}{#1}}%
  \endgroup
}

\newcommand{\rankbox}[1]{%
  \begingroup
  \setlength{\fboxsep}{1.5pt}%
  \colorbox{black!8}{\textcolor{black!75}{#1}}%
  \endgroup
}
\definecolor{ourpurple}{HTML}{ECE2F7}

\newcommand{\softcomment}[1]{\textcolor{black!60}{#1}}

\newcommand{\cC}{\mathcal C}

\newcommand{\cX}{\mathcal X}

\newcommand{\cH}{\mathcal H}
\newcommand{\cT}{\mathcal T}

\newcommand{\E}{\mathbb E}

\newcommand{\poly}{\textnormal{poly}}

\newcommand{\Ver}{\mathrm{V}}
\newcommand{\Time}{\mathrm{T}}

\newcommand{\Err}{\mathrm{Err}}
\newcommand{\Run}{\mathrm{Run}}
\newcommand{\Comp}{\mathrm{Comp}}

\title{Distribution-Aware Algorithm Design \\ with LLM Agents}

\author{%
  Saharsh Koganti \\
  Texas A\&M University \\
  \texttt{saharshk11@tamu.edu} \\
  \And
  Priyadarsi Mishra \\
  Texas A\&M University \\
  \texttt{priyadarsimishra@tamu.edu} \\
  \And
  Pierfrancesco Beneventano \\
  Massachusetts Institute of Technology \\
  \texttt{pierb@mit.edu} \\
  \And
  Tomer Galanti \\
  Texas A\&M University \\
  \texttt{galanti@tamu.edu} \\
}

\begin{document}

\maketitle

\begin{abstract}
Many optimization problems arise repeatedly from a fixed but unknown distribution. Even when the worst-case problem is hard, this distribution may carry reusable structure, such as recurring geometry, decompositions, or resource patterns. We study how to infer such structure from sample instances and compile it into solver code that runs faster on future instances while preserving solution quality.
Our central abstraction is a \emph{solver hint}: distribution-specific structure inferred from samples and used to specialize a solver. We prove that the empirically fastest sample-consistent solver generalizes in both correctness and runtime over fixed solver libraries, and that identifiable hints can be recovered from polynomially many samples.
We instantiate the framework with LLM code agents on $21$ combinatorial-optimization distributions across $7$ problem classes. The synthesized solvers reach mean normalized quality $0.971$ while running orders of magnitude faster than classical heuristics, Gurobi, and time-limited exact backends, though they do not dominate every baseline on every family. Against LLM synthesis baselines, they are faster than one-shot Codex, one-shot Claude Code, and a best-of-$5$ open-model variant; they improve quality over Claude Code and best-of-$5$, while nearly matching Codex quality and running substantially faster. This isolates the contribution of the iterative synthesis loop without claiming uniform domination over every LLM baseline. On the PACE 2025 Dominating Set private instances, the synthesized solver is valid on all $100$ graphs and runs roughly $75\times$--$125\times$ faster than released competition solvers, within a few percent of their solution size. These results suggest LLM agents can discover distribution-specific computational shortcuts and compile them into efficient solver code.
\end{abstract}

\section{Introduction}

The theory of algorithms is traditionally organized around worst-case input
classes. An algorithm for SAT, TSP, graph coloring, or set cover is judged by
its behavior over all instances of the problem. This perspective gives robust
guarantees, but it deliberately ignores a central fact of deployment: many hard
problems are solved repeatedly, and the instances are not arbitrary. A routing
system, scheduler, simulator, compiler, or optimization service typically sees
many instances generated by the same hidden process. The ambient problem may be
worst-case hard, yet the deployment distribution may carry reusable structure.

Average-case complexity offers a formal response: analyze an algorithm under a
specified input distribution. In practice, though, the distribution is rarely
given analytically; it is observed through examples. This defines a third access
model for algorithm design. In worst-case analysis the designer receives no
distributional information; in average-case analysis, a mathematical description
of the distribution; in our setting, only samples, which must be turned into an
algorithm.

We call this setting {\bf \emph{distribution-aware program learning}}. Given
sample instances $S\sim D^n$ from an unknown deployment distribution $D$, the
learner returns executable solver code to be run on future instances from the
same distribution. Success is measured by both solution quality and runtime.
This runtime requirement is essential: two solvers may both return valid or
near-optimal solutions while one uses far less computation. Thus, for learned
programs, generalization is not only about outputs; it is also about
computation.

Our central abstraction is a \emph{solver hint}: a reusable computational
shortcut inferred from samples and used to specialize a solver. A hint may be a
SAT backdoor, a graph separator, a latent decomposition, an active-resource
pattern, a geometric template, or another structure shared across instances.
The sample-to-solver map therefore factors through the compilation map $\Comp$
as $S \longmapsto \widehat h_S \longmapsto \widehat c_S=\Comp(\widehat h_S)$,
where $\widehat h_S$ is the recovered hint and $\widehat c_S$ the resulting
solver. The sample is not merely used to predict solutions; it is used to
discover why future instances from the same source admit cheaper computation.

This reframes learning as a mechanism for discovering tractability. A learned
solver is a constructive witness that the deployment distribution is easier than
the ambient worst-case problem. Correctness can often be protected by
verification, repair, or fallback to a generic solver, leaving the learned
component to act purely as the shortcut. In SAT, for instance, a complete solver
preserves correctness while a recovered backdoor yields exponential speedups on
future formulas; in optimization, recovered decompositions, active constraints,
or geometric templates replace generic search with specialized computation.

{\bf Contributions.\enspace} We make the following contributions:
\begin{enumerate}[leftmargin=1.5em,itemsep=2pt,topsep=2pt]
\item \emph{Distribution-aware program learning.} We introduce a sample-access
model for algorithm design in which examples from an unknown deployment
distribution are turned into executable solver code, evaluated not only by
solution quality but also by runtime on future instances.
\item \emph{Solver hints.} We formalize reusable distribution-specific structure,
inferred from samples and used to specialize a solver, giving a
sample-to-hint-to-solver view of how data can improve computation.
\item \emph{Generalization guarantees.} For fixed solver libraries, the
empirically fastest sample-consistent solver generalizes in both correctness and
runtime; for identifiable hint classes, polynomially many samples suffice to
recover the hidden structure and obtain a faster specialized solver.
\item \emph{Hidden-backdoor model for SAT.} We give a concrete construction in
which samples improve computation without learning correctness: fallback
preserves correctness, while the recovered backdoor yields exponential
per-instance speedup on the deployment distribution.
\item \emph{Empirical instantiation with LLM code agents.} Across $21$ structured
combinatorial-optimization distributions, the synthesized solvers reach mean
normalized quality $0.971$ and run orders of magnitude faster than classical
heuristics, Gurobi, and time-limited exact backends. They improve over heuristic
baselines in quality and remain competitive with a learned ML baseline, though
they do not dominate every family. Against LLM synthesis baselines, they are
faster than one-shot Codex, one-shot Claude Code, and a best-of-$5$ open-model
variant; they improve quality over Claude Code and best-of-$5$, and nearly
match Codex quality while running substantially faster. On released PACE
2025~\cite{pace2025_results} Dominating Set private instances, the synthesized
solver is valid on all $100$ graphs and runs roughly $75\times$--$125\times$
faster than released competition solvers, within a few percent of their solution
size.
\end{enumerate}

\begin{figure}[t]
\centering
\begin{adjustbox}{max width=\linewidth,center}
\begin{tikzpicture}[
    font=\footnotesize, >=Latex,
    box/.style={draw=black!50, line width=0.5pt, rounded corners=2.5pt, align=center,
        text width=2.9cm, minimum height=0.78cm, inner sep=3.5pt, fill=white},
    ghost/.style={draw=black!28, dash pattern=on 2pt off 1.6pt, line width=0.5pt,
        rounded corners=2.5pt, align=center, text width=2.9cm, minimum height=0.78cm,
        inner sep=3.5pt, fill=none, text=black!42},
    ours/.style={draw=ourline!70, line width=0.5pt, rounded corners=2.5pt, align=center,
        text width=2.9cm, minimum height=0.78cm, inner sep=3.5pt, fill=ourfill},
    oursHint/.style={draw=ourline, line width=0.9pt, rounded corners=2.5pt, fill=ourhint,
        align=center, text width=2.9cm, minimum height=0.78cm, inner sep=3.5pt},
    title/.style={font=\footnotesize\bfseries, align=center, text width=3.4cm},
    ourstitle/.style={font=\footnotesize\bfseries, align=center, text width=3.4cm, text=ourtext},
    rowlbl/.style={font=\scriptsize\itshape, anchor=east, text=black!55, align=right, text width=2.0cm},
    arrow/.style={->, line width=0.6pt, shorten >=1pt, shorten <=1pt, draw=black!65},
    oursArrow/.style={->, line width=0.8pt, shorten >=1pt, shorten <=1pt, draw=ourline},
    arrlbl/.style={font=\scriptsize\itshape, text=black!60, inner sep=1.5pt},
    oursArrlbl/.style={font=\scriptsize\itshape, text=ourtext, inner sep=1.5pt},
    classlbl/.style={font=\scriptsize\itshape, text=ourtext, align=center},
]
    \def\xA{0} \def\xB{4.2} \def\xC{8.4}
    \def\yTitle{2.72} \def\yIn{1.62} \def\yMid{0.42} \def\yOut{-0.82}

    \begin{scope}[on background layer]
        \fill[bandcol, rounded corners=4pt]
            ($(\xA,\yMid)+(-1.78,-0.52)$) rectangle ($(\xC,\yMid)+(1.78,0.52)$);
    \end{scope}
    \begin{scope}[on background layer]
        \fill[ourline!7, rounded corners=6pt]
            ($(\xC,\yOut)+(-1.78,-0.66)$) rectangle ($(\xC,\yTitle)+(1.78,0.5)$);
    \end{scope}

    \node[rowlbl] at (-1.95, \yIn)  {access to $D$};
    \node[rowlbl] at (-1.95, \yMid) {learned\\representation};
    \node[rowlbl] at (-1.95, \yOut) {deployed solver};

    \node[title]     at (\xA, \yTitle) {Worst-case\\algorithm design};
    \node[title]     at (\xB, \yTitle) {Average-case\\complexity};
    \node[ourstitle] at (\xC, \yTitle) {This paper};

    \node[box]   (A1) at (\xA, \yIn)  {none\\(worst case over $D$)};
    \node[ghost] (A2) at (\xA, \yMid) {none};
    \node[box]   (A3) at (\xA, \yOut) {algorithm with\\worst-case guarantee};
    \draw[arrow] (A1) -- (A2);
    \draw[arrow] (A2) -- (A3) node[arrlbl, midway, right=2pt] {analyze};

    \node[box]   (B1) at (\xB, \yIn)  {$D$ specified\\analytically};
    \node[ghost] (B2) at (\xB, \yMid) {none\\($D$ given)};
    \node[box]   (B3) at (\xB, \yOut) {algorithm tuned\\to $D$};
    \draw[arrow] (B1) -- (B2);
    \draw[arrow] (B2) -- (B3) node[arrlbl, midway, right=2pt] {analyze};

    \node[ours]     (C1) at (\xC, \yIn)  {sample $S\sim D^n$};
    \node[oursHint] (C2) at (\xC, \yMid) {solver hint $\hat{h}_S\in\cH$};
    \node[ours]     (C3) at (\xC, \yOut) {solver $\widehat{c}_S\in\cC$};
    \draw[oursArrow] (C1) -- (C2) node[oursArrlbl, midway, right=2pt] {learn};
    \draw[oursArrow] (C2) -- (C3) node[oursArrlbl, midway, right=2pt] {compile $\Comp$};

    \draw[ourline, line width=0.6pt, decorate, decoration={brace, amplitude=4pt, raise=2pt}]
        (C3.north east) -- (C3.south east);
    \node[classlbl, anchor=west] at ($(C3.east)+(0.42,0)$)
        {effective class\\$\Comp(\cH)\subseteq\cC$};
\end{tikzpicture}
\end{adjustbox}
\caption{
{\bf Three access models for designing solvers against a distribution \(D\).} Worst-case design assumes no distributional information, while average-case complexity assumes an analytic specification of \(D\); neither learns an intermediate representation. We study the intermediate sample-access regime: from \(S\sim D^n\), the learner infers a solver hint \(\hat h_S\in\cH\) and compiles it into a deployed solver \(\widehat c_S=\Comp(\hat h_S)\), so the effective search space is the structured subfamily \(\Comp(\cH)\subseteq\cC\). Our agents approximate this sample-to-hint-to-solver map.
}
\label{fig:teaser_concept}
\end{figure}

\section{Related Work}

We study learned executable procedures, where generalization concerns not only output quality but also runtime. Prior work addresses pieces of this problem through average-case analysis, algorithm selection, self-improving and learning-augmented algorithms, adaptive computation, neural combinatorial optimization, and program synthesis. We unify these perspectives through a sample-to-hint-to-solver formalism: samples from an unknown deployment distribution yield a reusable hint and a specialized solver, evaluated on fresh instances by quality and runtime.

{\bf Average-case, beyond-worst-case, and structural complexity.\enspace}
Average-case complexity studies algorithms under input distributions rather than
worst-case instances
\citep{levin1986average,impagliazzo1995personal,bogdanov2006average}. Smoothed
analysis and the broader beyond-worst-case program similarly explain why
algorithms can perform well on structured or perturbed instances despite
pessimistic worst-case bounds
\citep{spielman2004smoothed,roughgarden2019beyond,roughgarden2020beyond}.
Parameterized complexity \citep{downey2013fundamentals} and backdoor
complexity \citep{williams2003backdoors} identify structural parameters that
make otherwise hard problems tractable. These frameworks show that tractability
can be distributional or structural, but typically assume that the distribution,
perturbation model, parameter, or backdoor is specified analytically. We study
the complementary sample-access problem: the relevant structure is not given in
advance, but must be inferred from examples and compiled into an executable
solver.

{\bf Algorithm selection, configuration, and portfolios.\enspace}
The algorithm-selection problem of \cite{rice1976algorithm} formalizes the
idea that different solvers are preferable on different instances. Systems such
as SATzilla \citep{xu2008satzilla}, Hydra \citep{xu2010hydra}, and AutoFolio
\citep{lindauer2015autofolio} use instance features and performance data to
select, configure, or combine solvers, with broader surveys in
\citep{kotthoff2014algorithm,kerschke2019automated}. This literature is close
in spirit to ours because it treats solver performance as distribution- and
instance-dependent. The main distinction is that selection and configuration
usually operate over a fixed portfolio or parameterized solver family. In
contrast, our learner may synthesize new solver code together with a
sample-derived hint. Thus the learned object is not only a choice among existing
algorithms, but a specialized computational procedure for the target regime.

{\bf Self-improving algorithms.\enspace}
Self-improving algorithms provide the closest classical analogue to our setting.
They observe inputs from a fixed but initially unknown distribution during a
training phase and use this information to run faster on future inputs from the
same distribution
\citep{ailon2011self,clarkson2014selfimproving,cheng2020generalization}. This
shares our basic principle that deployment data can improve computation rather
than only prediction. The difference is in scope and representation. Classical
self-improving algorithms are usually hand-designed for a particular problem
family, with problem-specific distributional assumptions and carefully analyzed
data structures. We instead study a program-learning setting in which the
analysis procedure, hint representation, and deployment solver may all be
synthesized, allowing the same framework to range across multiple optimization
problems and hidden distribution families.

{\bf Learning-augmented algorithms.\enspace}
Learning-augmented algorithms incorporate predictions into classical algorithms,
aiming for methods that are consistent when predictions are accurate and robust
when they are not
\citep{lykouris2018competitive,mitzenmacher2022algorithms,wei2020optimal,christianson2023optimal}.
This line of work shows how learned information can improve algorithmic
performance without abandoning classical guarantees. Our setting differs in what
is learned and when it is used. Rather than supplying an auxiliary prediction to
a fixed algorithm at test time, we use samples during a train-time analysis phase
to produce a hint and synthesize a solver specialized to the deployment
distribution. Robustness can still be supplied by verification, repair, or
fallback, but the primary learned object is the executable solver itself.

{\bf Adaptive computation and test-time budgets.\enspace}
A separate line of work treats computation as a resource to be allocated
adaptively. Examples include feature acquisition and cost-sensitive prediction
\citep{xu2012greedymiser,nan2017adaptive}, early-exit and dynamic-routing neural
networks
\citep{teerapittayanon2016branchynet,wu2018blockdrop,wang2018skipnet}, and
test-time computation for language models through reasoning, sampling, or search
\citep{wei2022chainofthought,wang2023selfconsistency,yao2023tree,snell2025scaling}.
These methods adapt the amount or path of computation used by a predictive
model. Our work adapts computation at a different level: the learner modifies
the algorithmic structure of the solver code itself. The runtime improvement is
therefore not only a matter of exiting earlier or sampling more efficiently, but
of replacing generic search or optimization with distribution-specific
computation when the samples reveal such a shortcut.

{\bf Neural policies for combinatorial optimization.\enspace}
Neural combinatorial optimization learns policies, heuristics, or
representations for hard optimization problems. Representative examples include
pointer networks and reinforcement-learning approaches for routing
\citep{vinyals2015pointer,bello2017neural,kool2019attention}, graph neural
heuristics for combinatorial problems \citep{khalil2017learning}, learned branching
and search rules for mixed-integer programming
\citep{khalil2016learning,gasse2019exact}, and neural approaches to SAT solving
\citep{selsam2019learning,kurin2020can}. Surveys such as
\citep{bengio2021machine,cappart2023combinatorial} summarize the broader area.
These methods demonstrate that learned policies can guide search or directly
construct solutions. Our emphasis is different: we learn executable solver code
and distributional hints. A neural policy can be one component inside such a
solver, but our formalism also includes symbolic, combinatorial, or hybrid
procedures whose control flow and runtime structure change with the recovered
distributional hypothesis.

{\bf Program synthesis and code generation.\enspace}
Program synthesis from examples includes systems such as FlashFill
\citep{gulwani2011flashfill}, neural synthesis systems such as DeepCoder and
RobustFill \citep{balog2017deepcoder,devlin2017robustfill}, and approaches that
infer sketches or intermediate programs before producing executable code
\citep{nye2019sketches}. Large language models have expanded the scope of code
generation to competitive programming, real-world software tasks, and
tool-using engineering agents
\citep{hendrycks2021apps,li2022alphacode,jimenez2024swebench,zhuo2025bigcodebench,huang2024mlagentbench,chan2025mlebench}.
These works primarily ask whether generated programs satisfy a functional
specification or solve a coding task. We ask a distributional algorithm-design
question: given samples from a repeated deployment regime, can synthesized code
specialize its computation to that regime while maintaining high solution
quality?

{\bf Learning theory for executable hypotheses.\enspace}
Classical learning theory studies generalization of predictions, classifiers,
or losses
\citep{valiant1984theory,vapnik1971uniform,vapnik1998statistical,shaibook}.
Occam-style bounds \citep{blumer1987occam} and statistical-query theory
\citep{kearns1998statisticalqueries,feldman2017sqcomplexity} provide
foundational ways to reason about sample complexity and learnability. Recent
work on learning programs with LLM priors studies empirical risk minimization
over program classes \citep{singhal2025llmpriors}. We build on the view that
programs can be learned hypotheses, but focus on the additional computational
dimension: an executable hypothesis has both semantic behavior and runtime
behavior, and both must generalize to the deployment distribution.

{\bf Classical solvers and heuristic baselines.\enspace}
Our empirical setting also builds on classical exact, approximate, and heuristic
methods for combinatorial optimization. These include branch-and-bound
\citep{land1960automatic}, dynamic programming \citep{held1962dynamic}, greedy
and local-search heuristics
\citep{chvatal1979greedy,lin1965computer,rosenkrantz1977analysis,helsgaun2000effective,brelaz1979new},
and modern optimization or constraint-solving systems such as Gurobi, OR-Tools,
CP-SAT, SCIP, HiGHS, CBC, CLP, and MaxSAT solvers
\citep{gurobi2026,perron2025ortools,perron2023cpsat,bestuzheva2023scip,highs2024,huangfu2018parallelizing,forrest2005cbc,forrest2022clp,ignatiev2019rc2,martins2014openwbo,piotrow2020uwrmaxsat,davies2013exploiting,davies2013postponing}.
We use these methods as baselines, subroutines, and reference points. The goal
is not to replace the algorithmic insights embodied in general-purpose solvers,
but to test whether samples from a repeated regime can be used to synthesize
specialized procedures with better quality--runtime tradeoffs on that regime.

\section{Problem Setup}
\label{sec:framework}

We study learning when the learned object is an executable solver and the
deployment criterion includes runtime. Classical statistical learning measures
generalization by accuracy: a hypothesis is good if it performs well on fresh
draws from \(D\) \citep{valiant1984theory,vapnik1971uniform,vapnik1998statistical}.
For solvers, accuracy is incomplete. Two solvers may both return valid solutions
on the same deployment distribution, yet differ substantially in runtime. We
therefore ask how well a learner can identify, from samples, a solver whose
computation is specialized to \(D\).

This places the problem between worst-case and average-case analysis. Worst-case
design assumes no distributional information, while average-case complexity
assumes an analytic distribution. We study the sample-access regime: given
\(S=(x_1,\ldots,x_n)\sim D^n\) from an unknown deployment distribution, the
learner returns solver code for future instances from the same \(D\).

Formally, let \(\cX\) be the instance space, and let
\(\Ver(x,z)\in\{0,1\}\) indicate whether \(z\) is a valid solution for \(x\). A
solver \(c\in\cC\) is a program mapping \(x\mapsto c(x)\), with execution time
\(\Time(c,x)\in\mathbb{R}_{\ge 0}\). We evaluate \(c\) by its deployment error
\(\Err_D(c):=\Pr_{x\sim D}[\Ver(x,c(x))=0]\) and expected deployment runtime
\(\Run_D(c):=\E_{x\sim D}[\Time(c,x)]\).

Correctness is a feasibility constraint: among solvers with \(\Err_D(c)=0\),
two solvers may differ arbitrarily in \(\Run_D(c)\). For a solver class \(\cC\),
define the class-relative distribution-aware runtime optimum as
\(\Run_D^\star(\cC):=\inf_{c\in\cC:\,\Err_D(c)=0}\Run_D(c)\). This is the best
expected deployment runtime achievable by a correct solver in \(\cC\). Unlike
worst-case runtime, it depends on \(D\), and the role of the sample is to
discover which fast correct solver the present \(D\) admits.

{\bf Selection from a fixed library.\enspace}
When \(\cC\) is given in advance, a natural rule is the runtime-aware analogue
of empirical risk minimization. Let
\(\widehat\Err_S(c):=\frac{1}{n}\sum_{i=1}^n
\mathbf 1\{\Ver(x_i,c(x_i))=0\}\) and
\(\widehat\Run_S(c):=\frac{1}{n}\sum_{i=1}^n\Time(c,x_i)\). We select
\(\widehat c_S\in\arg\min_{c\in\cC:\,\widehat\Err_S(c)=0}\widehat\Run_S(c)\),
the empirically fastest sample-consistent solver. Thus fixed-library selection
empirically minimizes the sample analogue of \(\Run_D^\star(\cC)\). This rule is
appropriate when the solver library is enumerated, but it does not address
richer settings where the useful specialization is not already present in
\(\cC\).

{\bf Synthesis via solver hints.\enspace}
To move beyond enumerated libraries, we factor the learner through a
\emph{solver hint}: reusable structure inferred from samples and compiled into
solver code. A hint space \(\cH\) and compilation map \(\Comp:\cH\to\cC\) split
learning into \(S\mapsto\widehat h_S\mapsto
\widehat c_S=\Comp(\widehat h_S)\). The hint may encode a backdoor,
decomposition, active-resource pattern, local repair rule, or other
distribution-specific shortcut. We focus on the regime in which \(\Comp(h)\) is
correct for every \(h\in\cH\), typically because the compiled solver falls back
to a generic complete solver. The sample is then not used to learn correctness,
but to identify which shortcut to compile.

\section{Method}
\label{sec:method}

The setup above defines the object we want to learn: a reusable solver hint
compiled into executable solver code. In realistic synthesis, however, the hint
space, evidence statistics, and compilation map are unknown. The learner must
propose what structure to look for, measure it from samples, and write code that
exploits the resulting summary on future instances.

Our method implements this sample-to-hint-to-solver factorization with an LLM
code agent. Each candidate contains a hypothesis about the hidden structure, an
analysis program that estimates it from public training instances, and a
deployment solver conditioned on the resulting summary. Validation selects among
candidate factorizations by quality, optimality, and runtime. Additional
implementation details, prompt schemas, validation checks, and failure handling
appear in Appendix~\ref{app:method_details}.


\begin{figure}[t]
\centering
\begin{adjustbox}{max width=\linewidth,center}
\begin{tikzpicture}[
    font=\scriptsize, >=Stealth,
    box/.style={draw=black!55, line width=0.5pt, rounded corners=2pt, align=center,
        inner sep=3pt, minimum height=0.64cm, font=\scriptsize},
    data/.style={box, fill=black!4, text width=1.45cm},
    hypbox/.style={box, fill=hypbg, draw=hypfg!75!black, text width=1.0cm, text=hypfg},
    anabox/.style={box, fill=anabg, draw=anafg!75!black, text width=1.0cm, text=anafg},
    solbox/.style={box, fill=solbg, draw=solfg!75!black, text width=1.0cm, text=solfg},
    hintbox/.style={box, fill=ourhint, draw=ourline, line width=0.8pt, text width=1.35cm, text=ourtext, minimum height=0.7cm},
    eval/.style={box, fill=black!4, text width=1.55cm},
    rank/.style={box, fill=black!8, text width=1.7cm},
    deploy/.style={box, fill=ourfill, draw=ourline, line width=0.8pt, text width=1.35cm, text=ourtext},
    slbl/.style={font=\tiny\bfseries},
    arr/.style={-{Stealth[length=4pt,width=4pt]}, line width=0.6pt, draw=black!72, shorten >=1.2pt, shorten <=1.2pt},
    aarr/.style={-{Stealth[length=4pt,width=4pt]}, line width=0.85pt, draw=ourline, shorten >=1.2pt, shorten <=1.2pt},
    darr/.style={-{Stealth[length=4pt,width=4pt]}, line width=0.75pt, draw=black!58, dash pattern=on 3pt off 2pt, shorten >=1.2pt, shorten <=1.2pt},
    plbl/.style={font=\tiny\itshape, text=ourtext, inner sep=1.5pt},
]
    \def\xS{0} \def\xH{2.04} \def\xA{3.85} \def\xC{5.66}
    \def\xE{7.75} \def\xR{10.19} \def\xD{12.52}
    \def\yTop{1.4} \def\yMid{-0.2}

    \begin{scope}[on background layer]
        \fill[ourline!9, rounded corners=5pt]
            ($(\xH-0.78,\yMid-0.5)+(0.16,0.16)$) rectangle ($(\xC+0.78,\yTop+0.6)+(0.16,0.16)$);
        \fill[ourline!12, rounded corners=5pt]
            ($(\xH-0.78,\yMid-0.5)+(0.08,0.08)$) rectangle ($(\xC+0.78,\yTop+0.6)+(0.08,0.08)$);
        \fill[ourline!7, rounded corners=5pt]
            (\xH-0.78,\yMid-0.5) rectangle (\xC+0.78,\yTop+0.6);
        \fill[hypbg, rounded corners=2.5pt] (\xH-0.62,\yTop-0.42) rectangle (\xH+0.62,\yTop+0.55);
        \fill[anabg, rounded corners=2.5pt] (\xA-0.62,\yTop-0.42) rectangle (\xA+0.62,\yTop+0.55);
        \fill[solbg, rounded corners=2.5pt] (\xC-0.62,\yTop-0.42) rectangle (\xC+0.62,\yTop+0.55);
    \end{scope}

    \node[font=\tiny\itshape, text=ourtext, anchor=west] at (\xH-0.74,\yTop+0.82)
        {synthesize candidate $c=(H_c,A_c,s_c)$ \,(beam of $K$ per round)};

    \node[slbl, text=hypfg] at (\xH, \yTop+0.4) {1. hypothesis};
    \node[slbl, text=anafg] at (\xA, \yTop+0.4) {2. analysis};
    \node[slbl, text=solfg] at (\xC, \yTop+0.4) {3. solver};

    \node[data]    (samples) at (\xS, \yTop) {public\\samples $S^{\mathrm{pub}}$};
    \node[hypbox]  (Hc)  at (\xH, \yTop) {$H_c$};
    \node[anabox]  (Ac)  at (\xA, \yTop) {$A_c$};
    \node[solbox]  (sc)  at (\xC, \yTop) {$s_c$};
    \node[hintbox] (ac)  at (\xA, \yMid) {hint $a_c$};
    \node[eval]    (eval) at (\xE, \yTop) {evaluate\\$s_c(\cdot,a_c)$};
    \node[rank]    (rank) at (\xR, \yTop) {rank by\\$(Q,O,-T)_{\mathrm{val}}$};
    \node[deploy]  (deploy) at (\xD, \yTop) {deploy $\widehat c_S$};

    \draw[arr] (samples) -- (Hc);
    \draw[arr] (Hc) -- (Ac);
    \draw[arr] (Ac) -- (sc);
    \draw[arr] (sc) -- (eval);
    \draw[arr] (eval) -- (rank);
    \draw[arr] (rank) -- (deploy);

    \draw[aarr] (Ac.south) -- (ac.north) node[plbl,midway,left=1pt]{execute};
    \draw[aarr] (ac.east) -- ($(\xC,\yMid)$) -- (sc.south)
        node[plbl,pos=0.62,right=1pt]{condition};

    \draw[darr] (rank.south) |- ($(\xH,\yMid-0.95)$) -- (Hc.south);
    \node[font=\tiny\itshape, text=black!60, fill=ourline!7, inner sep=2pt, rounded corners=1pt]
        at ($(\xA+0.4,\yMid-0.95)$) {refine / fork / replace / push runtime / push quality};
\end{tikzpicture}
\end{adjustbox}
\caption{
{\bf Method pipeline.} For each candidate $c=(H_c,A_c,s_c)$, three sequential
LLM calls produce a structured hypothesis $H_c$, an analysis program $A_c$, and
a deployment solver $s_c$. Executing $A_c$ on the public training sample yields
the recovered hint $a_c=A_c(S^{\mathrm{pub}}_{\mathrm{tr}})$, on which $s_c$ is
conditioned. Candidates are generated in a diversity-preserving beam of $K$ per
round, evaluated on public splits, ranked lexicographically by validation
quality, optimality, and (negative) runtime $(Q,O,-T)_{\mathrm{val}}$, and
refined (refine / fork / replace / push runtime / push quality). The best
candidate across all rounds is re-analyzed on $S^{\mathrm{pub}}_{\mathrm{tr}}$
and deployed as $\widehat c_S$.
}
\label{fig:method_pipeline}
\end{figure}

{\bf Candidate representation.\enspace}
Each candidate has the form \(c=(H_c,A_c,s_c)\). The hypothesis \(H_c\) is a
structured natural-language description of a suspected distributional rule. The
analysis program \(A_c\) maps the public training sample to a compact JSON-serializable summary, $a_c=A_c(S_{\mathrm{tr}}^{\mathrm{pub}})$, and the deployment solver maps a new public instance and this summary to a
solution, \(z=s_c(x^{\mathrm{pub}},a_c)\). Thus \(a_c\) is the empirical solver
hint, while \(s_c(\cdot,a_c)\) is the compiled solver. Neither the hint space nor
the compilation map is fixed in advance; both are generated separately for each
candidate.

Public instances are stripped of evaluator-only fields, including family
identity, hidden-rule metadata, optimum solutions, and optimum objective values.
The agent sees only the public instance format, the problem specification, the
scoring rule, and samples from the unknown structured distribution.

{\bf Three-stage construction.\enspace}
Candidates are generated by three sequential LLM calls. The first produces
\(H_c\): the suspected rule, evidence to measure, solver strategy, expected
failure modes, and diversity key. The second writes \(A_c\), which measures this
evidence and compresses it into a reusable summary. The third is conditioned on
\(H_c\), \(A_c\), and the executed summary \(a_c\), and writes the deployment
solver \(s_c\). Thus synthesis is sample-conditioned: the solver is written
against an actual estimate of distributional structure, not merely a plan to
compute one.

{\bf Search and selection.\enspace}
Because the relevant hypothesis class is unknown, we search over a beam of
candidates. The initial beam is seeded with broad structural directives, such as
latent subtypes, separators, bottlenecks, decompositions, higher-order
interactions, objective-aware marginal rules, and shortcut-plus-repair
strategies. Later rounds refine surviving candidates, fork them into different
diversity classes, replace brittle hypotheses, or push them toward higher
quality or lower runtime. Each child is conditioned on the parent hypothesis,
analysis output, code, validation metrics, and representative public failure
cases.

Candidates are ranked lexicographically by $(Q_{\mathrm{val}},\,O_{\mathrm{val}},\,-T_{\mathrm{val}})$, where \(Q_{\mathrm{val}}\) is average normalized quality, \(O_{\mathrm{val}}\) is optimality rate, and \(T_{\mathrm{val}}\) is average
runtime. Beam survivors are chosen in two passes: first, the best candidate for
each diversity key is retained; then remaining slots are filled by the
top-ranked candidates overall. Failed candidates receive zero quality and a
large failure runtime. After the final iteration, we select the best candidate
among all evaluated candidates, rerun its analysis on \(S_{\mathrm{tr}}^{\mathrm{pub}}\), and deploy the resulting solver.

\section{Theory}
\label{sec:theory}

The method above treats algorithm design as learning: from samples of an
unknown distribution, the agent identifies reusable structure and
compiles it into solver code. We now abstract this mechanism and ask when such
sample-conditioned design generalizes beyond the observed instances.

We study two regimes. First, the learner selects from a fixed solver library,
using empirical runtime to identify a fast correct solver for the deployment
distribution. Second, the useful specialization is not enumerated in advance;
the learner must recover a reusable structural \emph{hint} and compile it into
a solver. The results use standard concentration and union-bound arguments
~\citep{shaibook}. Their purpose is to formalize the core principle: samples can
improve computation when they identify a shortcut shared by future instances.
Proofs appear in Appendix~\ref{app:proofs}.

\subsection{Runtime-Aware ERM Over a Fixed Solver Class}

When the solver library is fixed, the empirically fastest sample-consistent
solver approaches the best correct distribution-specialized solver in the
class.

\begin{restatable}[Runtime-aware generalization for library selection]{theorem}{thmruntimeawaregeneralization}
\label{thm:oracle_code}
Assume \(0\le \Time(c,x)\le \Time_{\max}\) for every \(c\in\cC\) and
\(x\in\cX\), and assume \(\cC\) contains at least one solver that is correct
almost surely under \(D\). Let \(\pi\) be a prior on \(\cC\) with
\(\sum_{c\in\cC}\pi(c)\le 1\), and write
\(\Gamma(c):=\log(1/\pi(c))\). Let $\cC^{\rm feas} = \{c \in \cC:\Err_D(c)=0\}$. For every \(\delta\in(0,1)\), with probability at
least \(1-\delta\) over \(S\sim D^n\),
\begin{small}
\[
\Err_D(\widehat c_S)
\!\le\!
\frac{\Gamma(\widehat c_S)\!+\!\log(\tfrac{2}{\delta})}{n},~~\Run_D(\widehat c_S)
\!\le\!
\inf_{c\in\cC^{\rm feas}}
\!\left\{
\Run_D(c)
\!+\!
2\Time_{\max}
\sqrt{
\frac{
\max\{\Gamma(\widehat c_S),\Gamma(c)\}\!+\!\log(\tfrac{4}{\delta})
}{2n}
}
\right\}
\]
\end{small}
\end{restatable}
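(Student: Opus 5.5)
The proof combines an Occam-style bound for the realizable error term with a weighted Hoeffding bound for the runtime term. The failure probability is split as $\delta/2$ for each, and a union bound finishes.

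\textbf{Error bound.} Fix $c\in\cC$ with $\Err_D(c)>\epsilon_c:=(\Gamma(c)+\log(2/\delta))/n$. The probability that $c$ is consistent on all $n$ i.i.d.\ samples is
\[
(1-\Err_D(c))^n\le e^{-n\epsilon_c}=\pi(c)\,\delta/2 .
\]
Summing over $c$ and using $\sum_c\pi(c)\le 1$, the following holds with probability at least $1-\delta/2$: every sample-consistent $c$ has $\Err_D(c)\le\epsilon_c$. Since $\widehat c_S$ is sample-consistent by construction, the first inequality follows; the event is uniform over $c$, so the data-dependent choice of $\widehat c_S$ causes no problem. The selection rule is well defined on this event. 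Some $c^\dagger$ is correct almost surely, so with probability one it has $\widehat\Err_S(c^\dagger)=0$, and the feasible set is nonempty. I will assume the infimum is attained, for instance when $\cC$ is countable and runtimes take finitely many values; otherwise I will carry an arbitrary slack $\eta\to 0$.

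\textbf{Runtime bound.} For each $c$, Hoeffding's inequality for $[0,\Time_{\max}]$-bounded variables gives
\[
|\widehat\Run_S(c)-\Run_D(c)|\le \Time_{\max}\sqrt{(\Gamma(c)+\log(4/\delta))/(2n)}
\]
except with probability $2e^{-2n t^2/\Time_{\max}^2}=\pi(c)\delta/2$. A weighted union bound makes this hold simultaneously for all $c$ with probability at least $1-\delta/2$. Now take any $c\in\cC^{\rm feas}$. It is correct almost surely, so it is sample-consistent with probability one (a countable union of null events), and hence $\widehat\Run_S(\widehat c_S)\le\widehat\Run_S(c)$. Chaining the deviations,
\[
\Run_D(\widehat c_S)\le \widehat\Run_S(\widehat c_S)+\Delta(\widehat c_S)\le \widehat\Run_S(c)+\Delta(\widehat c_S)\le \Run_D(c)+\Delta(c)+\Delta(\widehat c_S),
\]
where $\Delta(c)$ denotes the deviation radius above. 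Bounding $\Delta(c)+\Delta(\widehat c_S)\le 2\Time_{\max}\sqrt{(\max\{\Gamma(\widehat c_S),\Gamma(c)\}+\log(4/\delta))/(2n)}$ and taking the infimum over $c$ gives the second inequality. Intersecting the two events gives total probability at least $1-\delta$.

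The main obstacle is the data dependence of $\widehat c_S$. Because it depends on $S$, a fixed-$c$ Hoeffding bound cannot be applied to it directly. The prior-weighted union bound is what resolves this: it gives uniform control whose radius adapts through $\Gamma(c)$ to whichever solver is selected. A secondary subtlety is making "feasible implies sample-consistent almost surely" hold for all of $\cC^{\rm feas}$ at once. This requires $\cC$ to be countable, which is implicit in the assumption $\sum_c\pi(c)\le 1$ with positive weights on the relevant solvers. Solvers with $\pi(c)=0$ have $\Gamma(c)=\infty$, so their bounds are vacuous and they can be ignored.
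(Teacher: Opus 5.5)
Your proposal is correct and follows the paper's proof essentially step for step. It uses an Occam-style bound $(1-\Err_D(c))^n\le e^{-n\Err_D(c)}$ with a prior-weighted union bound at level $\delta/2$, a prior-weighted Hoeffding bound at level $\delta/2$ with radius $\Time_{\max}\sqrt{(\Gamma(c)+\log(4/\delta))/(2n)}$, the chain through $\widehat\Run_S(\widehat c_S)\le\widehat\Run_S(c)$ for feasible $c$, and monotonicity of the square root to combine the two radii. Your additional remarks are legitimate refinements that the paper's proof passes over silently: restricting to the countably many solvers with $\pi(c)>0$ makes all feasible solvers simultaneously sample-consistent almost surely, and you also address attainment of the empirical minimizer.
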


For a finite uniform library with $\pi(c)=1/|\cC|$, Theorem~\ref{thm:oracle_code} says that
\(\widehat c_S\) approaches \(\Run_D^\star(\cC)\) up to an additive
\(O(\Time_{\max}\sqrt{\log|\cC|/n})\) term, while sample consistency controls
deployment error. Thus runtime-aware ERM is not merely choosing a fast solver on
the sample; it estimates the best correct distribution-specialized solver
available in the class. The guarantee is class-relative: it does not say that
the library contains the right specialization, only that if such a solver is
present, empirical runtime minimization can identify it from samples. This
motivates the hint-based synthesis regime below, where the specialization itself
must be constructed.

\subsection{Synthesis via Learnable Hints}

In the rich domains we ultimately care about, no fixed library contains the right specialized solver. To make the search tractable, we shift it from full solvers to a smaller space of reusable structure. Let $\cH$ be a finite hint space, where each $h\in\cH$ induces a distribution $D_h$ over instances sharing that structure. We assume a realizable setting: $D=D_{h^\star}$ for some unknown $h^\star\in\cH$, and both training and deployment instances are drawn from $D_{h^\star}$.

Given a score family with a margin separation, hint recovery reduces to a finite-class estimation problem. Suppose score functions $\{\psi_h:\cX\to[0,1]\}_{h\in\cH}$ satisfy
\[
\E_{x\sim D_h}[\psi_h(x)] \ge \E_{x\sim D_h}[\psi_g(x)] + \gamma
\qquad\text{for every }h\in\cH\text{ and every }g\in\cH\setminus\{h\},
\]
for some $\gamma>0$. Given $S=(x_1,\ldots,x_n)$, the learner returns $\widehat h\in\arg\max_{h\in\cH}\widehat\mu_S(h)$ with $\widehat\mu_S(h):=\tfrac{1}{n}\sum_{t=1}^n \psi_h(x_t)$.

\begin{restatable}[Exact recovery under identifiable structure]{theorem}{thmthreewayefficiency}
\label{thm:three_way}
If $|\cH|=N$ and the margin is $\gamma>0$, then $n \ge \frac{2}{\gamma^2}\log\frac{2N}{\delta}$ samples suffice for $\widehat h=h^\star$ with probability at least $1-\delta$.
\end{restatable}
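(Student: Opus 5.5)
The plan is to apply Hoeffding's inequality to each empirical score and then take a union bound over all hints. Since $x_1,\ldots,x_n$ are i.i.d.\ from $D=D_{h^\star}$ and each $\psi_h$ takes values in $[0,1]$, write $\mu(h):=\E_{x\sim D_{h^\star}}[\psi_h(x)]$. For each fixed $h\in\cH$, Hoeffding gives
\[
\Pr\bigl[|\widehat\mu_S(h)-\mu(h)|\ge \gamma/2\bigr]\le 2\exp(-n\gamma^2/2).
\]
Choosing deviation $\gamma/2$ is the natural target: if every empirical mean is within less than $\gamma/2$ of its expectation, the margin of $\gamma$ cannot be reversed.

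Next I would take a union bound over all $N$ hints. The probability that some $h$ deviates by at least $\gamma/2$ is at most $2N\exp(-n\gamma^2/2)$. This is at most $\delta$ exactly when $n\ge \frac{2}{\gamma^2}\log\frac{2N}{\delta}$, which is the stated sample size. (A slightly tighter one-sided count is possible, but the two-sided bound already matches the constant in the statement.)

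On the complementary good event, I would fix any $g\neq h^\star$ and chain the inequalities, using the margin condition instantiated at $h=h^\star$:
\[
\widehat\mu_S(h^\star)>\mu(h^\star)-\gamma/2\ge \mu(g)+\gamma/2>\widehat\mu_S(g).
\]
The inequalities are strict, so $h^\star$ is the unique maximizer, and therefore $\widehat h=h^\star$ regardless of how ties in the $\arg\max$ are broken.

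The only delicate step is strictness at the boundary. Hoeffding bounds the probability of a deviation $\ge\gamma/2$, so the good event has all deviations strictly below $\gamma/2$, and that strictness is what guarantees a unique maximizer. Beyond that, the proof is a routine concentration-plus-union-bound argument, and I expect no real obstacle. Realizability is used only to identify the sampling distribution with $D_{h^\star}$, which is what makes the margin condition applicable.
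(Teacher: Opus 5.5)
Your proposal is correct and follows the paper's proof essentially line for line: Hoeffding at deviation $\gamma/2$ for each $\psi_h$, a union bound over the $N$ hints giving $2Ne^{-n\gamma^2/2}\le\delta$, and the chained margin inequality on the good event. Your choice to bound the event of deviation $\ge\gamma/2$ makes all deviations strict on the good event, so $h^\star$ is the unique maximizer. This is a small improvement over the paper's chain, which uses $\ge$ throughout and so only shows $h^\star$ is \emph{a} maximizer unless ties are broken in its favor.
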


The sample complexity is logarithmic in $|\cH|$ and inverse-quadratic in the margin, after which the learner compiles $\widehat h$ into a specialized solver. The theorem is intentionally idealized: in realistic domains the hard part is not estimating $\widehat h$ for a known score family but \emph{discovering} what the hint should be, what statistic reveals it, and how to compile it into code. This is precisely the regime our experiments target. We use an LLM agent as an approximate procedure for these three sub-tasks and evaluate whether the resulting solvers improve in deployment runtime and quality on structured task distributions. Before turning to that empirical study, we give a formal example where the score family is known and Theorem~\ref{thm:three_way} applies directly.

\subsection{A Formal Example: Hidden SAT Backdoors}
\label{sec:sat_backdoor_example}

SAT illustrates the point cleanly because correctness never requires learning: a complete solver is always available. The value of learning is computational, recovering a reusable backdoor that makes future solving faster. Fix variables $[d]$ and backdoor size $k$. The hint space is $\cH=\binom{[d]}{k}$, and an unknown $B\in\cH$ indexes a distribution $D_B$ over CNF formulas on $d$ variables. We assume $B$ is a strong backdoor into a tractable class $\cT$: for every $F$ in the support of $D_B$ and every $\alpha\in\{0,1\}^B$, the restricted formula $F|_{B=\alpha}$ lies in $\cT$. The learner observes $F^{(1)},\ldots,F^{(m)}\sim D_B$ but not $B$. Assume membership in $B$ is identifiable from a bounded variable-level salience statistic $\sigma_i(F)\in[0,1]$ with $\E_{F\sim D_B}[\sigma_i(F)] \ge q_1$ for $i\in B$ and $\le q_0$ for $i\notin B$, with margin $\gamma=q_1-q_0$. The learner estimates $\widehat \sigma_i=\frac{1}{m}\sum_t \sigma_i(F^{(t)})$ and sets $\widehat B$ to the top-$k$ variables. The compiled solver enumerates assignments to $\widehat B$, solves the residual if it lies in $\cT$, and otherwise falls back to a complete base solver---preserving correctness for every $\widehat B$ while gaining speed when $\widehat B=B$.

\begin{restatable}[Learning a hidden SAT backdoor from samples]{theorem}{thmbackdoorsat}
\label{thm:backdoor_sat}
If $m \ge 8\,\gamma^{-2}\log\frac{2d}{\delta}$, then $\widehat B=B$ with probability at least $1-\delta$.
\end{restatable}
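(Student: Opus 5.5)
The plan is a per-variable Hoeffding bound followed by a union bound over the $d$ variables. The key observation is that top-$k$ selection returns exactly $B$ whenever every empirical salience sits within $\gamma/2$ of its mean. In that event, for every $i\in B$ and $j\notin B$,
\[
\widehat\sigma_i > \E[\sigma_i]-\tfrac{\gamma}{2} \ge q_1-\tfrac{\gamma}{2} = q_0+\tfrac{\gamma}{2} \ge \E[\sigma_j]+\tfrac{\gamma}{2} > \widehat\sigma_j .
\]
So every backdoor variable strictly outranks every non-backdoor variable, and since $|B|=k$ the top-$k$ set is exactly $B$. Strict inequalities also mean ties cannot arise at the boundary.

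To control the deviations, note that $\sigma_i(F^{(t)})\in[0,1]$ and the $F^{(t)}$ are i.i.d. from $D_B$. Hoeffding's inequality then gives
\[
\Pr\bigl[|\widehat\sigma_i-\E\sigma_i|\ge \gamma/2\bigr]\le 2\exp(-m\gamma^2/2).
\]
A union bound over the $d$ variables puts the total failure probability at most $2d\exp(-m\gamma^2/2)$. This is at most $\delta$ once $m\ge 2\gamma^{-2}\log(2d/\delta)$, and the stated hypothesis $m\ge 8\gamma^{-2}\log(2d/\delta)$ is stronger than this. The extra constant factor leaves room to use one-sided bounds or a slightly looser deviation threshold.

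A second route is to cast the problem as an instance of Theorem~\ref{thm:three_way}, with hint score $\psi_{B'}(F)=\frac1k\sum_{i\in B'}\sigma_i(F)$. However, this gives a margin of only about $\gamma/k$ and a union bound over $\binom{d}{k}$ hints, which is worse than the statement. So I would argue per variable directly rather than invoke that theorem.

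The main obstacle is ensuring the argument does not need independence across variables. The union bound handles this, because each variable's deviation event is controlled separately and no joint structure is required. Correctness of the compiled solver does not need to be proved here: the statement concerns only $\widehat B=B$, and the fallback to a complete base solver already guarantees correctness for any $\widehat B$.
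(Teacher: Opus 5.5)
Your proof is correct and follows the same route as the paper's: a per-variable Hoeffding bound, a union bound over the $d$ variables, and the observation that uniform concentration makes every variable of $B$ strictly outrank every variable outside $B$. The only difference is the deviation threshold. The paper uses $\varepsilon=\gamma/4$ with a non-strict good event, which is exactly where its constant $8$ comes from. Your choice of $\gamma/2$ with a strict good event already yields the sharper requirement $m\ge 2\gamma^{-2}\log(2d/\delta)$.
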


On the event $\widehat B=B$, the learned solver runs in $O(2^k\poly(|F|))$ on formulas from $D_B$, with learner-side cost $O(md)$ for the salience scores. The sample is not used to learn SAT correctness; it identifies which structural shortcut to compile. A concrete planted Horn-backdoor family satisfying the separation assumption is given in Appendix~\ref{app:horn_backdoor}.

\section{Experiments}
\label{sec:experiments}

We evaluate whether the synthesis procedure of Section~\ref{sec:method}
extracts reusable structure from samples and improves the quality--runtime
tradeoff over heuristic and solver-backed baselines. The appendices give the
full implementation details (Appendix~\ref{app:method_details}), benchmark distributions (Appendix~\ref{app:benchmark_distributions}), baseline protocols
(Appendix~\ref{app:baseline_catalog_protocol}), additional
setup details (Appendix~\ref{app:additional_details}), full per-target results
(Appendix~\ref{app:full_results}), distributional-complexity diagnostics
(Appendix~\ref{app:distributional_complexity}), perturbation ablations (Appendix~\ref{app:perturbation_ablation}), and our results on the PACE Dominating Set competition (Appendix~\ref{app:pace_ds}).

\subsection{Experimental setup}

{\bf Benchmarks.\enspace}
The benchmark contains structured combinatorial optimization tasks. Each
\emph{target} pairs a problem class with a hidden distribution family, and is
designed so that recurring cross-instance structure, rather than per-instance
heuristic search alone, is the relevant signal. The learner observes sampled
public instances, but not the family identity, hidden rule, optimum solutions,
optimum objective values, or other hidden-rule metadata, which are retained only
by the evaluator. The benchmark spans seven problem classes with three hidden
families each ($21$ targets); we report aggregate results over all $21$. The full
target list is in Table~\ref{tab:benchmark_targets} of
Appendix~\ref{app:benchmark_distributions}.

\subsubsection{Baselines}

We compare the LLM synthesis agent of Section~\ref{sec:method} against the full
benchmark baseline catalog, evaluating every method under the same
public-instance protocol: all baselines receive the benchmark instances but no
access to the hidden family-generation rule. The catalog spans four groups:
classical heuristics and solvers, two one-shot LLM synthesis baselines, a
best-of-$5$ open-model synthesis variant, and a suite of learned neural
baselines.

{\bf Classical heuristics and solvers.\enspace}
The heuristic pool includes DSATUR-style coloring~\citep{brelaz1979new}, greedy
set-cover and graph rules for dominating-set and independent-set
variants~\citep{chvatal1979greedy}, density and relaxation-based rules for
packing and knapsack~\citep{dantzig1957discrete}, and insertion and local-search
heuristics for TSP, including two-opt and
LKH~\citep{lin1965computer,rosenkrantz1977analysis,helsgaun2000effective}. The
optimization-backed pool includes time-limited Gurobi
formulations~\citep{gurobi2026}, OR-Tools CP-SAT/GLOP
formulations~\citep{perron2025ortools,perron2023cpsat}, PySAT/RC2 MaxSAT
solvers~\citep{ignatiev2019rc2}, branch-and-bound
routines~\citep{land1960automatic}, Held--Karp dynamic programming for
TSP~\citep{held1962dynamic}, and external solvers including SCIP, HiGHS, CBC,
Open-WBO, and
UWrMaxSAT~\citep{bestuzheva2023scip,highs2024,forrest2005cbc,martins2014openwbo,piotrow2020uwrmaxsat}.
We report Gurobi separately because it runs under a $10$-second, single-thread
budget and may return an incumbent rather than a certified optimum; when several
time-limited exact or certifying backends are available, we report the fastest
one attaining the best certified or validated quality, denoted the
\emph{time-limited exact} baseline. The exhaustive per-problem catalog appears in
Table~\ref{tab:baseline_catalog} of Appendix~\ref{app:baseline_catalog_protocol}.

{\bf One-shot LLM baselines.\enspace}
The one-shot baselines invoke a single LLM coding agent to synthesize a solver
from the same problem specification, but \emph{without} the iterative
propose--evaluate--refine search loop used by our agent
(Section~\ref{sec:method}). The Codex baseline ($\mathrm{Cod}$) uses OpenAI
Codex~\citep{openai2025codex} and the Claude Code baseline ($\mathrm{Cld}$) uses
Anthropic Claude Code~\citep{anthropic2025claudecode}, each producing one solver
per target in a single pass under an identical prompt and specification budget.
They share our three-stage solver structure and tool access, isolating the value
of the search loop itself.

{\bf Best-of-$5$ open-model variant.\enspace}
To separate the contribution of iterative search from that of simply sampling a
capable model more often, the best-of-$5$ variant ($\mathrm{Gem@5}$) is built on
the open-weights Gemma~4 model (\texttt{gemma-4-31b-it}). For each target, it
draws five independent single-pass attempts under the same specification,
evaluates each on the public validation split, and deploys the best-scoring one.
This matches the synthesis budget of five candidates but removes the refinement
loop, parent-conditioning, and diversity-preserving beam: $\mathrm{Gem@5}$
selects among independent samples, whereas our agent refines and recombines them.

{\bf Learned ML baselines.\enspace}
We also compare against seven neural solvers, one architecture matched to each
problem class rather than a single generic model: physics-inspired GNNs for
coloring and independent set~\citep{schuetz2022graphcoloring,schuetz2022combinatorial},
a RUN-CSP-style constraint network for MaxSAT~\citep{toenshoff2021runcsp},
a GNN reinforcement-learning baseline for dominating set~\citep{chen2024dominatingset},
an attention-based construction model for TSP~\citep{kool2019attention},
a reinforcement-learning baseline for multidimensional knapsack~\citep{bushaj2024kmeans},
and a primal--dual learning baseline for packing LP~\citep{park2023selfsupervised}.
These are lightweight method-lineage baselines rather than exact reproductions
of the cited systems. Each network is trained per target on public
train/validation instances only and evaluated on the same held-out test
instances as every other method, with evaluator-only metadata stripped before
tensorization. Because ML inference is hardware-sensitive, we time it separately
on CPU (AMD EPYC 9554) and GPU (NVIDIA H100 NVL), reported as
$R_{\rm ML}^{\rm C}$ and $R_{\rm ML}^{\rm G}$; offline training time is excluded
from deployment runtime ratios, exactly as our agent's synthesis time is
excluded.

\subsubsection{Evaluation}

The primary metric is normalized quality, scaled so that $1.0$ is optimal and
larger is better; invalid or infeasible outputs receive quality zero. We also
report optimality rate, feasibility rate, and per-instance wall-clock runtime.
Per-problem quality definitions appear in Table~\ref{tab:quality_metrics} of
Appendix~\ref{app:additional_details}. Each per-target value averages over
held-out test instances and $10$ repeated evaluations per solver and split; all
aggregates first average within each target distribution and then take the
arithmetic mean over the $21$ targets.

For our GPT-synthesized solver we report quality $Q^{\rm ours}_{\rm GPT}$, and
for each comparator $b$ its quality $Q_b$ and the quality lift
$\Delta Q_b := Q^{\rm ours}_{\rm GPT}-Q_b$, over
$b \in \{\mathrm{Gem@5}, \mathrm{Cod}, \mathrm{Cld}, \mathrm{avg},
\mathrm{Heur}, \mathrm{ML}\}$. Here $\mathrm{avg}$ is the average heuristic and
$\mathrm{Heur}$ the \emph{fast high-quality heuristic}, i.e.\ the fastest
heuristic whose average quality over the three target distributions in a problem
class is at least $95\%$ of the best average heuristic quality for that class.
Configured evaluation timeouts are scored as zero quality and a $360$s runtime;
under this convention the one-shot Claude solver times out on one MIS and one TSP
target. For $\mathrm{Gem@5}$, all $21$ targets enter the quality aggregates, with
invalid, infeasible, or erroring outputs scored as zero quality before averaging.

For runtime we report the following quantity $R_b := T_b/T^{\rm ours}_{\rm GPT}$ over
$b \in \{\mathrm{Gem@5}, \mathrm{Cod}, \mathrm{Cld}, \mathrm{ML}^{\rm C},
\mathrm{ML}^{\rm G}, \mathrm{Heur}, \mathrm{Gur}, \mathrm{Exact}\}$, where
$\mathrm{Gur}$ is Gurobi and $\mathrm{Exact}$ the fastest time-limited exact
solver per class. Ratios are computed per target and aggregated by geometric
mean, with $R_b>1$ meaning our solver is faster. Runtime aggregates for
$\mathrm{Gem@5}$ exclude erroring targets, whose measured runtime reflects
immediate failure rather than solver execution.


\subsection{Results}
\label{sec:main_results}

\begin{table}[t]
\centering
\scriptsize
\setlength{\tabcolsep}{1.5pt}
\renewcommand{\arraystretch}{1.06}
\begin{adjustbox}{max width=\linewidth,center}
\begin{tabular}{@{}l >{\columncolor{ourpurple}}c >{\columncolor{ourpurple}}c *{14}{c}@{}}
\toprule
& \multicolumn{1}{c}{}
& \multicolumn{1}{c}{}
& \multicolumn{6}{c}{Quality lift $\Delta Q_b := Q^{\rm ours}_{\rm GPT}-Q_b$}
& \multicolumn{8}{c}{Runtime ratio $R_b := T_b/T^{\rm ours}_{\rm GPT}$}\\
\cmidrule(lr){4-9}\cmidrule(lr){10-17}
Family
& $T^{\rm ours}_{\rm GPT}$
& $Q^{\rm ours}_{\rm GPT}$
& $\Delta Q_{\rm Gem@5}$ & $\Delta Q_{\rm Cod}$ & $\Delta Q_{\rm Cld}$ & $\Delta Q_{\rm avg}$ & $\Delta Q_{\rm Heur}$ & $\Delta Q_{\rm ML}$
& $R_{\rm Gem@5}$ & $R_{\rm Cod}$ & $R_{\rm Cld}$ & $R_{\rm ML}^{\rm C}$ & $R_{\rm ML}^{\rm G}$ & $R_{\rm Heur}$ & $R_{\rm Gur}$ & $R_{\rm Exact}$\\
\midrule
Coloring
& $2.7$ & $0.868$
& $+0.085$ & $-0.086$ & $-0.065$ & $+0.217$ & $+0.121$ & $-0.127$
& $0.62\times$ & $0.97\times$ & $59.0\times$ & $0.49\times$ & $0.76\times$ & $1326.3\times$ & $2285.6\times$ & $23.1\times$\\
MAXSAT
& $17.1$ & $1.000$
& $+0.004$ & $+0.006$ & $+0.000$ & $+0.122$ & $+0.083$ & $0.000$
& $36.0\times$ & $1.52\times$ & $1.00\times$ & $1.0\times$ & $1.0\times$ & $212.2\times$ & $328.8\times$ & $1.2\times$\\
MIS
& $18.8$ & $0.992$
& $+0.009$ & $+0.009$ & $+0.326$ & $+0.218$ & $+0.113$ & $-0.002$
& $1.43\times$ & $2.19\times$ & $165.8\times$ & $0.047\times$ & $0.085\times$ & $1904.2\times$ & $155.8\times$ & $41.2\times$\\
MDS
& $13.3$ & $0.973$
& $+0.111$ & $-0.027$ & $+0.022$ & $+0.148$ & $+0.124$ & $+0.021$
& $17.1\times$ & $22.1\times$ & $4.20\times$ & $0.16\times$ & $0.42\times$ & $1626.8\times$ & $443.0\times$ & $21.6\times$\\
Packing LP
& $3.3$ & $0.994$
& $+0.215$ & $-0.003$ & $-0.006$ & $+0.301$ & $+0.317$ & $+0.186$
& $1.75\times$ & $34.8\times$ & $30.2\times$ & $0.19\times$ & $0.30\times$ & $13483.7\times$ & $2829.1\times$ & $44.1\times$\\
MDKP
& $94.0$ & $0.973$
& $+0.579$ & $-0.010$ & $-0.011$ & $+0.215$ & $+0.009$ & $0.000$
& $0.032\times$ & $4.60\times$ & $3.69\times$ & $0.037\times$ & $0.060\times$ & $46.4\times$ & $33.8\times$ & $9.6\times$\\
TSP
& $14.6$ & $0.993$
& $+0.008$ & $-0.002$ & $+0.327$ & $+0.348$ & $-0.007$ & $+0.084$
& $3.34\times$ & $3.48\times$ & $104.4\times$ & $30.0\times$ & $61.0\times$ & $33.7\times$ & $117.5\times$ & $37.2\times$\\
\midrule
All ($21$)
& $\mathbf{12.7}$ & $\mathbf{0.971}$
& $\mathbf{+0.145}$ & $\mathbf{-0.016}$ & $\mathbf{+0.085}$ & $\mathbf{+0.224}$ & $\mathbf{+0.109}$ & $\mathbf{+0.023}$
& $\mathbf{2.37\times}$ & $\mathbf{4.53\times}$ & $\mathbf{17.4\times}$ & $\mathbf{0.36\times}$ & $\mathbf{0.61\times}$ & $\mathbf{564.9\times}$ & $\mathbf{345.1\times}$ & $\mathbf{16.9\times}$\\
\bottomrule
\end{tabular}
\end{adjustbox}
\vspace{0.15em}
\caption{
\textbf{The synthesized solver sits at a favorable point on the
quality--runtime frontier across $21$ target distributions.} It reaches
near-optimal mean quality ($0.971$) while running orders of magnitude faster than
classical heuristics, Gurobi, and time-limited exact solvers. It matches the
one-shot Codex solver in quality while several times faster, and is both
higher-quality and faster than the one-shot Claude solver. It also beats the ML
baselines and the best@5 Gemma~4 variant on quality---best@5 is comparably fast
but substantially weaker, and the inference-only ML models are the only baselines
that run faster. No baseline is simultaneously faster and higher-quality.
Positive $\Delta Q_b$ means our solver is higher-quality than comparator $b$;
$R_b>1$ means it is faster (quality normalized so $1.0$ is optimal).
}
\label{tab:headline_results}
\end{table}

\begin{table}[t]
\centering
\small
\setlength{\tabcolsep}{5pt}
\renewcommand{\arraystretch}{1.05}
\caption{\textbf{The synthesis pipeline works with an open generator, but a
stronger generator helps.} Holding the pipeline fixed and swapping only the
generator (GPT-5.2 vs.\ open Gemma~4) over $21$ targets, Gemma~4 already produces
valid, high-quality solvers on every family (mean $Q=0.883$, no timeouts),
showing the method does not depend on a frontier model. GPT-5.2 nonetheless leads
on quality everywhere ($\Delta Q_{\rm Gemma}\!:=\!Q_{\rm GPT}-Q_{\rm Gemma}>0$)
and is faster on most families
($R_{\rm Gemma}\!:=\!T_{\rm Gemma}/T_{\rm GPT}>1$), so generator quality
transfers into better synthesized solvers. The exception is MDKP, where Gemma
trades quality for speed. $Q$ is an arithmetic mean; $T$ and $R_{\rm Gemma}$ are
per-target geometric means.}
\label{tab:gemma_vs_gpt}
\begin{tabular}{@{}lcccccc@{}}
\toprule
& \multicolumn{2}{c}{Quality $Q$} & \multicolumn{2}{c}{Runtime $T$ (ms)} & \multicolumn{2}{c}{GPT-5.2 over Gemma~4}\\
\cmidrule(lr){2-3}\cmidrule(lr){4-5}\cmidrule(lr){6-7}
Family & GPT-5.2 & Gemma~4 & GPT-5.2 & Gemma~4 & $\Delta Q_{\rm Gemma}$ & $R_{\rm Gemma}$\\
\midrule
Coloring   & $0.868$ & $0.798$ & $2.7$  & $15.5$ & $+0.070$ & $5.73\times$\\
MAXSAT     & $1.000$ & $0.910$ & $17.1$ & $38.2$ & $+0.090$ & $2.23\times$\\
MIS        & $0.992$ & $0.982$ & $18.8$ & $25.8$ & $+0.010$ & $1.37\times$\\
MDS        & $0.973$ & $0.938$ & $13.3$ & $10.2$ & $+0.035$ & $0.77\times$\\
Packing LP & $0.994$ & $0.890$ & $3.3$  & $12.5$ & $+0.104$ & $3.80\times$\\
MDKP       & $0.973$ & $0.740$ & $94.0$ & $9.3$  & $+0.233$ & $0.099\times$\\
TSP        & $0.993$ & $0.926$ & $14.6$ & $28.0$ & $+0.067$ & $1.92\times$\\
\midrule
All ($21$) & $\mathbf{0.971}$ & $\mathbf{0.883}$ & $\mathbf{12.7}$ & $\mathbf{17.5}$ & $\mathbf{+0.088}$ & $\mathbf{1.38\times}$\\
\bottomrule
\end{tabular}
\end{table}

{\bf Main quality--runtime results.\enspace}
The relevant comparison is the quality--runtime tradeoff: a useful
distribution-aware solver should recover high-quality solutions without treating
each instance as worst case. Table~\ref{tab:headline_results} summarizes this
tradeoff over all $21$ benchmark target distributions; full per-target results
appear in Appendix~\ref{app:full_results}.
LLM synthesis reaches mean normalized quality $0.971$, improving by $+0.224$
over the average heuristic pool, by $+0.109$ over the fast high-quality
heuristic $\mathrm{Heur}$, and by $+0.023$ over the ML baseline. Using
geometric means of per-target runtime ratios, it is $564.9\times$ faster than
$\mathrm{Heur}$, $345.1\times$ faster than Gurobi, and $16.9\times$ faster
than the selected time-limited exact backend $\mathrm{Exact}$. The gains are
strongest on families where the synthesized solver finds a distribution-specific
shortcut, such as Packing LP, MDS, MIS, and TSP.
The comparison against other LLM-based solvers is also favorable. The one-shot
Codex solver is the closest competitor in quality ($\Delta Q=-0.016$), but our
solver is $4.5\times$ faster; the one-shot Claude solver is both lower-quality
($\Delta Q=+0.085$) and $17.4\times$ slower, timing out on one MIS and one TSP
target. The best@5 Gemma~4 variant runs at comparable speed ($2.4\times$) but is
substantially weaker ($\Delta Q=+0.145$), so the synthesis procedure---not merely
sampling many candidates from an open model---is what drives the quality.
The exceptions are informative. On TSP, LLM synthesis is slightly below
$\mathrm{Heur}$ in quality, though far faster. On Coloring, the ML baseline
attains higher quality, and across several families inference-only ML is faster
on CPU or GPU. The method therefore improves the average quality--runtime
frontier without dominating every baseline on every family; notably, no single
baseline is both faster and higher-quality across the suite.

{\bf Iteration ablation.\enspace}
We next examine how much of the runtime gain comes from iterative synthesis,
rather than from the first generated proposal alone. After each iteration, we
measure the test-time speedup of the best generated candidate found so far,
using the zero-shot generated solver as the reference point. This isolates the
effect of search depth on the efficiency of the synthesized solver.

\begin{figure}[t]
    \centering
    \begin{minipage}[t]{0.48\linewidth}
        \vspace{0pt}
        \centering
        \includegraphics[width=\linewidth]{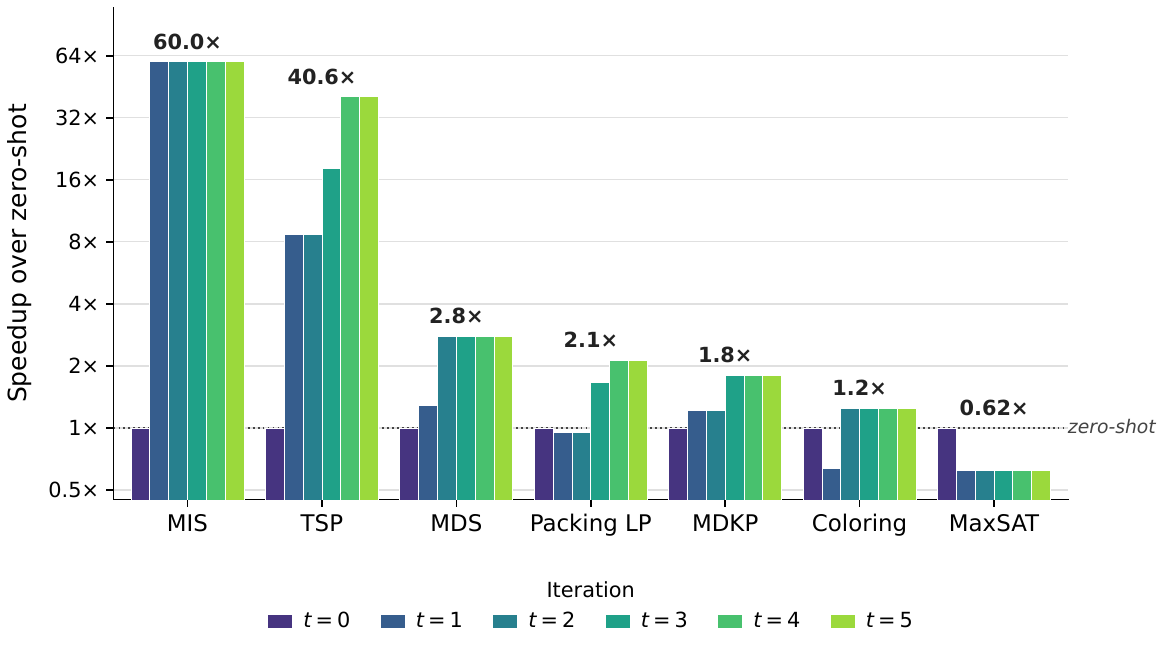}
    \end{minipage}
    \hfill
    \begin{minipage}[t]{0.47\linewidth}
        \vspace{0pt}
        \caption{
        {\bf Runtime speedup relative to the zero-shot generated solver across synthesis iterations.}
        The bar at \(t=0\) is the zero-shot reference. For \(t>0\), each bar reports the best
        generated candidate found up to iteration \(t\), with speedup computed as zero-shot
        runtime divided by candidate runtime. Values above \(1\times\) indicate faster execution
        than zero-shot, while values below \(1\times\) indicate slower execution. Tasks are ordered
        by final speedup.
        }
        \label{fig:runtime-per-iteration}
    \end{minipage}
\end{figure}

Figure~\ref{fig:runtime-per-iteration} shows that later synthesis iterations
often improve runtime beyond the first proposal. Some gains appear immediately,
as in MIS, while others require refinement: TSP improves from about
\(8.7\times\) after the first iteration to \(40.6\times\) by iteration \(4\),
and Packing LP recovers from initially slower candidates to about
\(2.1\times\) speedup. MDKP and MDS also improve after the initial proposal.

Thus, the loop is not merely selecting among equivalent first-pass proposals.
Later iterations can recover from poor specializations or refine useful ones
into faster implementations. The MaxSAT case is a counterexample: under this
search budget, generated candidates remain slower than zero-shot. Iterative
synthesis is therefore not uniformly beneficial, but can produce substantial
runtime gains when the family contains exploitable structure.

{\bf What did the LLM compile?\enspace}
The speedups in Table~\ref{tab:headline_results} are not just implementation
effects. In many cases, the selected solver changes the effective computation:
it replaces an ambient worst-case search or generic optimization call with a
distribution-specific procedure inferred from samples. Table~\ref{tab:compiled_structure_summary}
gives a compact summary of these computation patterns, while Appendix~\ref{app:distributional_complexity} (specifically Table~\ref{tab:dac_all_distributions}) gives the full per-distribution
expressions and notation.

\begin{table}[t]
\centering
\footnotesize
\setlength{\tabcolsep}{8pt}
\renewcommand{\arraystretch}{1.35}
\definecolor{ambientcol}{RGB}{248,248,248}
\begin{adjustbox}{max width=\linewidth,center}
\begin{tabular}{@{}>{\raggedright\arraybackslash}p{0.32\linewidth} >{\centering\arraybackslash}p{0.24\linewidth} >{\raggedright\arraybackslash}p{0.3\linewidth}@{}}
\toprule
\textbf{Distribution structure}
& \cellcolor{white}\textbf{Ambient}
& \textbf{Generated solver} \\
\midrule

MAXSAT: latent Boolean rules
& \cellcolor{ambientcol}$O^*(2^v)$
& $O\!\left(|F|\ell + R\,B\,\ell\Delta_{\rm occ}\right)$ \\

Coloring: planted palettes
& \cellcolor{ambientcol}$O^*(\kappa^n)$
& $O\!\left(R(n^2+m+n\kappa) + T_{\rm recolor}\right)$ \\

MIS: motif structure
& \cellcolor{ambientcol}$O^*(2^n)$
& $O\!\left(P(n+m) + T_{\rm local} + T_{\rm tiny}\right)$ \\

MDS: coverage kernels
& \cellcolor{ambientcol}$O^*(2^n)$
& $O\!\left(n+m + \textstyle\sum_j c_j 2^{t_j} + T_{\rm prune}\right)$ \\

Packing / MDKP: resource bottlenecks
& \cellcolor{ambientcol}$T_{\rm LP}(N,r,L)$ \,/\, $O^*(2^N)$
& $O\!\left(P(Nr+N\log N) + T_{\rm repair}\right)$ \\

TSP: latent geometry
& \cellcolor{ambientcol}$O(n^2 2^n)$
& $O\!\left(n^2\log n + B_{\rm tsp}\,n^2\right)$ \\

\bottomrule
\end{tabular}
\end{adjustbox}
\caption{
{\bf Representative computation patterns discovered by LLM synthesis.} Each row
contrasts the generic ambient search (which must stay valid on arbitrary
instances) with the bounded, distribution-specific computation the synthesized
solver uses instead. Bounds show dominant terms only; $R,B,P,T_{\bullet}$ are
fixed implementation budgets. Full per-distribution expressions and notation
appear in Table~\ref{tab:dac_all_distributions}.
}
\label{tab:compiled_structure_summary}
\end{table}

\begin{table}[t]
\centering
\scriptsize
\setlength{\tabcolsep}{3pt}
\renewcommand{\arraystretch}{1.0}
\caption{
\textbf{The synthesized solver owns the quality--runtime frontier on PACE 2025
Dominating Set.} Valid on all $100$ private instances, it runs two orders of
magnitude faster than the released PACE solvers for only $\approx$3\% larger
sets, and beats every solver in its own speed class on both axes---so no baseline
is simultaneously faster and higher-quality. ``Relative size'' and ``Speedup''
are relative to GPT-5.2 (${>}1$: GPT-5.2 is smaller and faster).
}
\label{tab:pace_main}
\begin{tabular}{@{}lcrcrr@{}}
\toprule
Solver & Valid & Avg.\ size & Relative size & Time (s) & Speedup ($\times$) \\
\midrule
\rowcolor{ourpurple} GPT-5.2 (ours)
& $100/100$ & $231{,}595$ & $1.00$ & $2.89$ & $1.0$ \\
\rowcolor{ourpurple} Gemma 4 (ours)
& $100/100$ & $231{,}667$ & $1.00$ & $6.03$ & $2.1$ \\
\midrule
Gemma 4 best@5
& $100/100$ & $261{,}741$ & $0.89$ & $2.69$ & $0.93$ \\
Codex (1-shot)
& $100/100$ & $236{,}926$ & $0.98$ & $6.34$ & $2.2$ \\
Claude Code (1-shot)
& timeout & n/a & n/a & $\ge 360$ & $\ge 124.6$ \\
\midrule
PACE leaderboard~\citep{aegheidelberg2025pace,fontan_et_al:LIPIcs.IPEC.2025.36,root_pace2025_solver,dafonseca_et_al:LIPIcs.IPEC.2025.34,polak2025greeduce,swat:LIPIcs.IPEC.2025.38}
& $75$--$100/100$ & $210$k--$224$k & $1.03$ & $218$--$360$ & $75$--$125$ \\
General heuristics (see Table~\ref{tab:baseline_catalog})
& $100/100$ & $235$k--$242$k & $0.96$--$0.99$ & $8.3$ & $2.9$ \\
ML baselines
& out of scope & --- & --- & --- & --- \\
Exact-style baselines
& timeout & n/a & n/a & $\ge 360$ & $\ge 124.6$ \\
Gurobi
& timeout & n/a & n/a & $\ge 360$ & $\ge 124.6$ \\
\bottomrule
\end{tabular}
\end{table}

The main pattern is that synthesis often compiles a structural shortcut. For
MAXSAT, the generated solvers exploit latent Boolean rules and bounded local
repair instead of searching over all assignments. For graph problems, they use
planted palettes, motif decompositions, hubs, gateways, or small residual
kernels instead of treating each graph as an arbitrary coloring, MIS, or MDS
instance. For Packing LP and MDKP, they exploit active resources or bottleneck
structure, replacing full LP or exponential knapsack search with sorting,
scoring, fractional filling, and bounded repair. For TSP, they exploit clustered
or latent geometric structure to construct and improve tours without running
full Held--Karp-style dynamic programming.

Thus, the empirical speedups reflect the intended mechanism of
distribution-aware program learning: the LLM is not merely producing faster
code for the same computation, but often compiling sampled regularities into a
lower-dimensional or smaller-residual computation.

{\bf External PACE Dominating Set comparison.\enspace}
To test whether the synthesis procedure produces useful solvers beyond our
controlled benchmark families, we also evaluate on the PACE 2025 Dominating Set
heuristic track. PACE released $100$ public instances and, after the
competition, $100$ private instances generated in a similar way
\citep{grobler_et_al:LIPIcs.IPEC.2025.32,pace2025_results}. We split the
public set into $50$ training and $50$ validation instances, and report on the
released private set. This comparison is not an official PACE score, but it
provides an external test against highly engineered competition solvers. We
compare against the top released Dominating Set heuristic submissions:
Fontan--Verger, Root, Swats, Shadoks, AEG Heidelberg, and Greeduce
\citep{fontan_et_al:LIPIcs.IPEC.2025.36,luo_et_al:LIPIcs.IPEC.2025.40,
swat:LIPIcs.IPEC.2025.38,dafonseca_et_al:LIPIcs.IPEC.2025.34,pace2025_results},
as well as the same general heuristic and exact-style baselines used in
Table~\ref{tab:headline_results}. The learned ML baselines do not apply at this
scale: the PACE instances have millions of nodes and edges, well beyond what the
ML models can process, so we omit them here.
On the private instances, the synthesized solver is the only method that is both
fully valid and fast: it returns verified-valid solutions on all $100$ graphs
and runs about two orders of magnitude faster than the released PACE solvers.
Crucially, no baseline dominates it on the quality--runtime frontier. Against
every solver in its own speed class it wins outright on both axes: it produces
smaller dominating sets than the one-shot Codex solver and the general
heuristics (size ratios $0.98$ and $0.959$--$0.987$, respectively) while also
being faster ($2.2\times$ and $2.9\times$), and the best@5 variant is only
marginally faster ($0.93\times$) at the cost of substantially worse solutions
(size ratio $0.89$). Against the heavily engineered PACE submissions, the
tradeoff falls in our favor: they obtain only $1.028$--$1.034$ smaller total
size while running $75\times$--$125\times$ slower, and that small quality edge is
not uniform---Swats returns verified-valid solutions on only $75$ of the $100$
private instances, so its size and runtime are measured on that matched subset,
whereas the synthesized solver is valid on all $100$. The synthesized solver
thus sits at the only favorable point on the frontier: within a few percent of
the competition winners' solution quality at two orders of magnitude less time,
and strictly better than everything in its runtime regime. The exact-style
baselines, including Gurobi, hit their configured evaluation time limits, as does
the one-shot Claude Code solver; all are recorded as timing out at the $360$s
cap and produce no usable solution, while the learned ML baselines are out of
scope entirely, unable to run on graphs of this size. Full details appear in
Appendix~\ref{app:pace_ds}.

\section{Discussion and Limitations}
\label{sec:conclusion}
We study learning when the output is executable solver code rather than a
prediction rule. The main lesson is that samples can improve computation when
they reveal structure reused across future instances. A solver hint captures
this structure: it is not a solution to one instance, but information that
changes the algorithm used for many later instances. Empirically, LLM synthesis
sometimes performs this amortized algorithm design, and our external evaluation
on the PACE 2025 Dominating Set track shows the resulting solvers remain
competitive with hand-engineered competition code outside our own benchmarks.
The generated solvers often change the computational scale, replacing broad
search or full optimization with distribution-specific sorting, scoring,
bounded repair, kernelization, or structured construction. Thus, the learned
object is closer to a specialized algorithm for the deployment regime than to a
tuned heuristic for isolated instances.
The same view explains the limitations. The one-time synthesis cost is useful
only when amortized over enough future instances, and is wasted when a
distribution is seen too few times. The resulting solver is specialized to the
sampled regime, so its advantage may degrade under distribution shift; we do not
characterize how much shift it tolerates, and leave a systematic study to future
work. The method also improves the average frontier without dominating every
baseline on every family---it trails the strongest heuristic on TSP and the ML
baseline on Coloring---so it complements rather than replaces existing solvers.
Finally, because the multi-agent system searches over a rich program space,
different runs may recover different hints, useful proxies, or brittle shortcuts.
Improving this stability while preserving the discovery of distribution-specific
computation is a natural direction for future work.

\newpage

\bibliographystyle{abbrv}
\bibliography{refs}

\newpage

\appendix

\section{Broader Impacts}
\label{app:broader_impacts}

This work studies methods for learning distribution-specific solvers from examples.
Potential positive impacts include reducing the computational cost of repeated
optimization workloads, making specialized solvers easier to construct, and
improving access to efficient algorithmic tools in scientific, engineering, and
logistics settings. Potential negative impacts arise if synthesized solvers are
deployed outside the distributions for which they were inferred, where brittle
specialization may lead to incorrect or inefficient decisions. More broadly,
automated solver synthesis could also reduce the barrier to optimizing objectives
in harmful or poorly governed applications. We therefore view validation on
held-out instances, explicit distributional assumptions, fallback mechanisms, and
human review of generated code as important safeguards.

\section{A Planted Horn-Backdoor Family}
\label{app:horn_backdoor}

We give a concrete distribution satisfying the separation assumption in
Section~\ref{sec:sat_backdoor_example}. Fix an unknown set
\(B\subseteq[d]\) of size \(k\), and generate a CNF formula
\(F\sim D_B\) as a conjunction of \(M\) independently sampled clauses.
Each clause is Horn with probability \(1-\rho\). With probability \(\rho\),
it is a non-Horn clause of the form
\[
x_i \vee x_j \vee \bigvee_{\ell\in T}\neg x_\ell,
\]
where \(i\sim\mathrm{Unif}(B)\), \(j\sim\mathrm{Unif}([d]\setminus B)\),
and \(T\subseteq[d]\setminus(B\cup\{j\})\) is sampled by any fixed rule.
For every assignment to the variables in \(B\), each non-Horn clause is
either satisfied or reduced to a Horn clause over the remaining variables.
Thus \(B\) is a strong backdoor to Horn-SAT.

Define the salience statistic
\[
\sigma_i(F)
=
\frac{1}{M}\sum_{C\in F}
\mathbf 1\{C\text{ is non-Horn and }x_i\text{ appears positively in }C\}.
\]
Then
\[
\E[\sigma_i(F)] =
\begin{cases}
\rho/k, & i\in B,\\
\rho/(d-k), & i\notin B.
\end{cases}
\]
Hence the variable-level separation assumption holds with
\(q_1=\rho/k\), \(q_0=\rho/(d-k)\), and
\(\gamma=\rho(1/k-1/(d-k))\), which is positive whenever \(k<d/2\).

\section{Additional Method Details}
\label{app:method_details}

This appendix records the experimental protocol behind
Section~\ref{sec:method}. The goal is to make clear what information was
available to the synthesis agent, how candidates were represented and selected,
and how invalid candidates were handled. The protocol enforces a separation
between distribution-level inference and deployment-time solving: the analysis
program may use the public training sample to estimate a reusable summary, while
the deployed solver must solve new public instances using only that summary and
the instance itself.

A single LLM-generated candidate is not treated as the learned algorithm.
Instead, the synthesis loop is a proposal-and-selection procedure over solver
hints and implementations. Different proposals may encode different structural
explanations of the same sample, and some may be brittle, slow, or incorrect.
The deterministic evaluator converts this high-variance proposal process into a
selected deployed solver by filtering candidates on public validation quality,
optimality, and runtime.

{\bf Public information and leakage boundary.\enspace}
For each target, the synthesis agent receives public training and validation
instances, together with a small task manifest. The manifest specifies the
problem type, the required solution format, the normalized-quality metric, and
basic instance-size information. It also states that the instances are drawn from
a shared but unknown structured distribution.

The public view excludes all fields used to define or score the hidden family.
In particular, it does not include the distribution-family identity, the planted
rule, optimum solutions, optimum objective values, or evaluator-only metadata.
This boundary is intended to model deployment: the learner knows the ambient
optimization problem and the scoring rule, but not the generative mechanism that
produces the deployment instances.

All synthesis stages use a shared instruction emphasizing that the goal is to
infer reusable structure from public samples and compile it into a fast solver.
The instruction states that validation quality is the primary selection signal,
and that runtime matters once quality is high. It does not reveal hidden-rule
metadata or test performance.

{\bf Candidate representation and stage contracts.\enspace}
Each candidate has the form \(c=(H_c,A_c,a_c,s_c)\), where \(H_c\) is a
structured hypothesis, \(A_c\) is a train-time analysis program,
\(a_c=A_c(S_{\mathrm{tr}}^{\mathrm{pub}})\) is the recovered summary, and
\(s_c\) is the deployment-time solver. The analysis program performs
distribution-level inference from the public training sample. The solver then
maps a new public instance and the recovered summary to a candidate solution,
\(z=s_c(x^{\mathrm{pub}},a_c)\). Thus \(a_c\) is the empirical analogue of the
solver hint in Section~\ref{sec:framework}.

The fields elicited at each stage are summarized in
Table~\ref{tab:prompt_schemas}. These stage contracts prevent hypothesis
formation, evidence extraction, and solver construction from collapsing into a
single opaque generation step. They also make the recovered hint explicit and
inspectable after synthesis.

\begin{table}[t]
\centering
\small
\setlength{\tabcolsep}{6pt}
\renewcommand{\arraystretch}{1.18}
\begin{tabularx}{\linewidth}{
    >{\raggedright\arraybackslash}p{0.16\linewidth}
    >{\raggedright\arraybackslash}X
}
\toprule
\rowcolor{headergray}
\textbf{Stage} & \textbf{Elicited content} \\
\midrule

\rowcolor{hypbg}
\textcolor{hypfg}{\textbf{Hypothesis}} &
A short title; a concrete rule summary describing the suspected reusable
structure; evidence that should be measured from public samples; the solver
strategy implied by the rule; expected failure modes; and a diversity key
identifying the broad explanation class. \\[0.35em]

\rowcolor{anabg}
\textcolor{anafg}{\textbf{Analysis}} &
A train-time procedure applied to \(S_{\mathrm{tr}}^{\mathrm{pub}}\); a
description of what the procedure estimates; a compact output requirement, so
that the summary is reusable rather than a copy of the training set; and a
robustness note favoring aggregate evidence over brittle per-instance artifacts.
\\[0.35em]

\rowcolor{solbg}
\textcolor{solfg}{\textbf{Solver}} &
A deployment-time procedure applied to a new public instance; a description of
how the recovered summary is used; a fallback for weak or ambiguous inferred
structure; and a runtime constraint discouraging repetition of train-time
analysis during deployment. \\
\bottomrule
\end{tabularx}
\caption{
{\bf Stage contracts used by the synthesis procedure.} The diversity key is used only
for beam selection and is not part of the validation score.
}
\label{tab:prompt_schemas}
\end{table}

{\bf Synthesis loop.\enspace}
Algorithm~\ref{alg:llm_synthesis} gives the full synthesis loop. Each round
constructs a batch of candidate hint--solver pairs, executes their analysis
programs on the public training sample, evaluates their solvers on the public
train and validation splits, and updates a diversity-preserving beam. After the
final round, the selected candidate is the best evaluated candidate across all
rounds, not necessarily the best member of the final beam. Thus the deployed
solver is validation-selected from a portfolio of generated candidates rather
than taken from a single LLM proposal.

\begin{algorithm}[t]
\caption{LLM-based distribution-to-algorithm synthesis}
\label{alg:llm_synthesis}
\begin{algorithmic}[1]
\Require Public train/validation sets
\(S_{\mathrm{tr}}^{\mathrm{pub}},S_{\mathrm{val}}^{\mathrm{pub}}\);
iterations \(R\), beam width \(B\), candidate budget \(K\)
\Ensure Learned solver \deploybox{\(\hat s\)} and recovered summary \hintbox{\(\hat a\)}

\State Initialize beam \(\mathcal B_0\) with diverse hypothesis directives
\State Initialize evaluated set \(\mathcal E\gets\emptyset\)

\For{\(r=0,\ldots,R-1\)}
    \State Build \(K\) candidate prompts from \(\mathcal B_r\)
    \Comment{\softcomment{seed / refine / fork / replace / push runtime / push quality}}

    \For{each prompt \(p\)}
        \State \hypbox{Generate hypothesis} \(\textcolor{hypfg}{H_c}\)
        \State \anabox{Generate analysis program} \(\textcolor{anafg}{A_c}\)
        \State \hintbox{Execute analysis} \(\textcolor{anafg}{a_c \gets A_c(S_{\mathrm{tr}}^{\mathrm{pub}})}\)
        \State \solbox{Generate solver} \(\textcolor{solfg}{s_c}\) conditioned on
        \((\textcolor{hypfg}{H_c},\,\textcolor{anafg}{A_c},\,\textcolor{anafg}{a_c})\)
        \State \evalbox{Evaluate} \(s_c(\cdot,a_c)\) on
        \(S_{\mathrm{tr}}^{\mathrm{pub}}\) and \(S_{\mathrm{val}}^{\mathrm{pub}}\)
        \State Add \(c=(\textcolor{hypfg}{H_c},\textcolor{anafg}{A_c},
        \textcolor{anafg}{a_c},\textcolor{solfg}{s_c})\) to \(\mathcal E\)
    \EndFor

    \State \rankbox{Score} each \(c\in\mathcal E\) by
    \[
    \mathrm{Score}(c)=
    \bigl(Q_{\mathrm{val}}(c),\,O_{\mathrm{val}}(c),\,-T_{\mathrm{val}}(c)\bigr).
    \]

    \State \rankbox{Beam update}: set \(\mathcal B_{r+1}\) using a
    diversity-preserving rule: keep the best candidate per diversity key, then
    fill remaining slots by score

    \State Extract \hintbox{summaries}, scores, and failure cases from
    \(\mathcal B_{r+1}\) for the next refinement round
\EndFor

\State Select \(c^\star\in\mathcal E\) with highest \(\mathrm{Score}(c^\star)\)
\State \hintbox{Re-run analysis}
\(\textcolor{anafg}{\hat a\gets A_{c^\star}(S_{\mathrm{tr}}^{\mathrm{pub}})}\)
\State \Return \deploybox{\(\hat s:=s_{c^\star}\)} and \hintbox{\(\hat a\)}
\end{algorithmic}
\end{algorithm}

{\bf Beam selection and refinement.\enspace}
Each hypothesis includes a diversity key identifying its broad explanation
class, such as a separator-based explanation, a bottleneck-resource explanation,
a latent-cluster explanation, or a geometric-template explanation. After each
round, candidates are ranked by the validation-aware score in
Algorithm~\ref{alg:llm_synthesis}. The next beam is then formed in two passes.
First, the best candidate for each diversity key is retained. Second, any
remaining slots are filled by the highest-scoring candidates overall.

This rule encourages exploration early in synthesis without preventing a strong
candidate from surviving once diversity has been exhausted. Refinement prompts
include the parent hypothesis, the recovered summary, aggregate train and
validation metrics, and representative failure cases. They do not include
hidden-rule metadata or test performance. The refinement action asks the agent
to refine the current explanation, fork to a different explanation, improve
quality, reduce runtime, or replace a hypothesis that appears to rely on an
accidental shortcut.

This design is important because LLM synthesis is stochastic. The search loop is
not only an optimization over code implementations; it is also an exploration
over competing explanations of the distribution. Diversity keys keep multiple
structural hypotheses alive, while validation selection determines which
hypothesis actually compiles into a reliable and fast solver.

{\bf Recovered summaries.\enspace}
The recovered summary \(a_c\) is not restricted to a fixed representation. Its
form depends on the structure inferred by the candidate. In graph problems,
summaries may contain vertex-role scores, cluster assignments, separator
candidates, or recurring motif statistics. In MaxSAT, they may contain
variable-salience scores, polarity statistics, backbone estimates, or
clause-position signals. In packing and knapsack problems, they may contain
estimated resource bottlenecks, item-class prototypes, or active-constraint
patterns. In TSP-like tasks, they may contain geometric prototypes, coarsened
tour skeletons, or detected coordinate transformations.

These summaries are not directly supervised or scored. They matter only through
the quality and runtime of the solver that uses them. Their role is to amortize
distribution-level inference: the analysis program may spend computation once on
the public training sample, while the deployed solver should use the resulting
summary cheaply on each new instance. This separation is what distinguishes the
method from per-instance prompting or repeated test-time reasoning: the expensive
distribution-level inference happens before deployment, and the deployed solver
is ordinary executable code conditioned on the recovered summary.

{\bf Failure handling.\enspace}
A candidate can fail during analysis, solver construction, or instance-level
execution. If the analysis program \(A_c\) raises an error or exceeds its time
budget, the candidate receives zero quality on every instance and is assigned a
large failure runtime. The same convention applies if the solver \(s_c\) fails
to load, for example because of a syntax error or unavailable dependency.

If both \(A_c\) and \(s_c\) load successfully but the solver returns an invalid
output on some instances, only those instances receive normalized quality zero.
The remaining instances are scored normally. This convention keeps the synthesis
loop fully automatic: malformed candidates are not manually repaired, and
partial failures degrade the validation score rather than requiring special-case
intervention.

{\bf Connection to the theoretical framework.\enspace}
The protocol implements the hint-learning decomposition from
Section~\ref{sec:framework}. The hypothesis \(H_c\) proposes a possible family
of reusable structure, the analysis program \(A_c\) estimates a hint from
samples, and the solver \(s_c\) compiles the estimated hint into an executable
algorithm. Unlike the idealized theory, the hint space and evidence statistics
are not fixed in advance. They are proposed by the LLM, and the resulting
candidates are selected by deterministic train-validation evaluation. Thus the
method is better viewed as search over a proposal distribution of candidate
hints and solvers, rather than as a deterministic sample-to-solver map.
Candidates survive only when their inferred summaries yield high-quality
solutions with low deployment runtime on public validation instances.

\section{Additional Experimental Details}
\label{app:additional_details}

\subsection{Benchmark Distributions}
\label{app:benchmark_distributions}

\begin{table}[t]
\centering
\small
\setlength{\tabcolsep}{7pt}
\renewcommand{\arraystretch}{1.18}
\caption{
\textbf{Benchmark target distributions.}
The benchmark contains $21$ targets: seven problem classes, each with three
hidden distribution families. Each target is split into $64$ training,
$32$ validation, and $500$ held-out test instances.
}
\label{tab:benchmark_targets}
\begin{adjustbox}{max width=\linewidth,center}
\begin{tabular}{@{}lllr@{}}
\toprule
\textbf{Problem} & \textbf{Distribution family} & \textbf{Objective} & \textbf{Instance size} \\
\midrule

\multirow{3}{*}{Coloring}
& Ring-template        & \multirow{3}{*}{min.\ colors}        & $169$ vertices \\
& Overlapping-palette  &                                      & $340$ vertices \\
& Separator-trap       &                                      & $238$ vertices \\
\midrule

\multirow{3}{*}{MAXSAT}
& Community-parity     & \multirow{3}{*}{max.\ satisfied clauses} & $240$ var.\ / $960$ cl. \\
& Last-clause signal   &                                          & $280$ var.\ / $1120$ cl. \\
& Latent-backdoor      &                                          & $128$ var.\ / $512$ cl. \\
\midrule

\multirow{3}{*}{MIS}
& Clique-path          & \multirow{3}{*}{max.\ set size}      & $190$ vertices \\
& Core-fringe trap     &                                      & $1000$ vertices \\
& Motif-bridge         &                                      & $195$ vertices \\
\midrule

\multirow{3}{*}{MDS}
& Gateway-hub          & \multirow{3}{*}{min.\ set size}      & $2800$ vertices \\
& Geometric-anchor     &                                      & $1600$ vertices \\
& Star-kernel          &                                      & $2800$ vertices \\
\midrule

\multirow{3}{*}{Packing LP}
& Block-coupled        & \multirow{3}{*}{max.\ value}         & $1200$ items / $40$ res. \\
& Active-resource      &                                      & $1200$ items / $40$ res. \\
& Single-bottleneck    &                                      & $1200$ items / $40$ res. \\
\midrule

\multirow{3}{*}{MDKP}
& Decoy-complement     & \multirow{3}{*}{max.\ value}         & $1040$ items / $48$ res. \\
& Latent-class         &                                      & $520$ items / $32$ res. \\
& Single-resource      &                                      & $1040$ items / $48$ res. \\
\midrule

\multirow{3}{*}{TSP}
& Clustered Euclidean  & \multirow{3}{*}{min.\ tour length}   & $120$ cities \\
& Latent-metric        &                                      & $120$ cities \\
& Paired-ribbon zigzag &                                      & $320$ cities \\

\bottomrule
\end{tabular}
\end{adjustbox}
\end{table}

Table~\ref{tab:benchmark_targets} summarizes the benchmark targets. The benchmark
contains $21$ targets, spanning seven problem classes with three hidden
distribution families per class. Each target defines a distribution over
structured optimization instances, rather than a collection of arbitrary
worst-case instances, and is split into $64$ training, $32$ validation, and $500$
held-out test instances. The goal is to test whether a synthesis procedure can
infer reusable regularities from a small sample and compile them into a
specialized solver.

The distributions are designed to cover several kinds of exploitable structure:
planted assignments, latent classes, recurring bottlenecks, geometric layouts,
motif decompositions, and hidden resource constraints. We briefly describe each
family below.

{\bf Graph coloring.\enspace}
The coloring targets contain graphs with planted low-color structure. In
\emph{Ring-template}, vertices are partitioned into local blocks with compatible
palette assignments and ring-style bridges between neighboring blocks. In
\emph{Overlapping-palette}, blocks share shifted and overlapping palettes, so a
solver must recover the global palette rather than color each block
independently. In \emph{Separator-trap}, separator vertices create long-range
color-reuse constraints, making local coloring rules unreliable unless the
global palette structure is inferred.

{\bf MAXSAT.\enspace}
The MAXSAT targets contain formulas with hidden assignments induced by small
latent rules. In \emph{Community-parity}, variables are partitioned into
communities whose values are controlled by parity-style functions of a few
anchor bits. In \emph{Last-clause signal}, the final clause reveals an anchor
pattern, while the remaining variables copy or negate anchor bits according to
a hidden rule table. In \emph{Latent-backdoor}, each instance comes from one of
several regimes in which variable blocks are governed by anchor-dependent
Boolean rules, with additional noise and bridge clauses.

{\bf Maximum independent set.\enspace}
The MIS targets are block-structured graph distributions. In \emph{Clique-path},
blocks alternate between clique-like and path-like motifs, with sparse bridge
edges between adjacent blocks. In \emph{Core-fringe trap}, a dense high-degree
core is surrounded by a low-degree fringe, so solvers that greedily prefer
low-degree vertices are pulled into the fringe and miss the larger core-aware
independent set. In \emph{Motif-bridge}, each instance samples a latent sequence
of motifs, including cliques, cycles, bicliques, and crown-like gadgets, and good
solvers should decompose the graph into motifs and account for bridge conflicts.

{\bf Minimum dominating set.\enspace}
The MDS targets emphasize different coverage structures. In \emph{Gateway-hub},
clusters contain hubs and gateways, where gateways provide overlapping coverage
across neighboring clusters. In \emph{Geometric-anchor}, vertices come from
geometric clusters with heterogeneous density and connector edges. In
\emph{Star-kernel}, each cluster has a stable hub that dominates most local
vertices, with sparse connectors between hubs. These targets test whether the
solver can identify high-coverage vertices rather than relying only on generic
degree heuristics.

{\bf Packing LP.\enspace}
The packing LP targets contain continuous packing problems with recurring
resource structure. In \emph{Block-coupled}, items belong to latent resource
blocks, but a shared coupling resource limits the combination of otherwise
attractive block-local items. In \emph{Active-resource}, each instance has a
hidden active-resource regime with recurring dual-price patterns. In
\emph{Single-bottleneck}, one resource is consistently much tighter than the
others, so the LP optimum is largely governed by value per unit of the hidden
bottleneck resource.

{\bf Multidimensional knapsack.\enspace}
The MDKP targets are integer packing problems with latent item and resource
structure. In \emph{Decoy-complement}, high-value decoy items look attractive
locally but consume a scarce hidden resource, while complementary item classes
combine better across resources. In \emph{Latent-class}, items come from hidden
resource-consumption classes, and the active bottleneck changes across
instances. In \emph{Single-resource}, one hidden resource is typically tight,
making density with respect to that resource a strong but not always exact rule.

{\bf Traveling salesperson problem.\enspace}
The TSP targets contain Euclidean or Euclidean-like geometric structure. In
\emph{Clustered Euclidean}, cities are sampled from balanced clusters arranged
around a ring, so good tours combine short intra-cluster paths with the correct
inter-cluster order. In \emph{Latent-metric}, each instance comes from one of
several geometric regimes, including ring clusters, paired ribbons, and
barrier-bridge layouts, so the solver must classify the latent geometry and
choose an appropriate tour construction strategy. In \emph{Paired-ribbon
zigzag}, cities lie along two nearly parallel rails, so the optimal tour
traverses one rail and returns along the other rather than alternating between
them.

\subsection{Evaluation Metrics}
\label{app:evaluation_metrics}

We evaluate each solver on held-out instances using three quantities: normalized
quality, optimality rate, and runtime. The goal is to compare solvers across
problem classes with different objective scales and directions. Some tasks are
maximization problems, such as MAXSAT, MIS, Packing LP, and MDKP; others are
minimization problems, such as Coloring, MDS, and TSP. We therefore convert all
objective values to a common normalized quality score in \([0,1]\), where larger
is better. Table~\ref{tab:quality_metrics} summarizes the normalization used for
each problem class.

\begin{table}[t]
\centering
\small
\setlength{\tabcolsep}{6pt}
\renewcommand{\arraystretch}{1.15}
\begin{tabular}{@{}ll@{}}
\toprule
Problem & Normalized quality \\
\midrule
Coloring
& $k_{\rm opt} / k_{\rm alg}$ \\

MAXSAT
& $\operatorname{sat}_{\rm alg} / \operatorname{sat}_{\rm opt}$ \\

MIS
& $|\operatorname{IS}_{\rm alg}| / |\operatorname{IS}_{\rm opt}|$ \\

MDS
& $|\operatorname{DS}_{\rm opt}| / |\operatorname{DS}_{\rm alg}|$ \\

Packing LP, MDKP
& $\operatorname{obj}_{\rm alg} / \operatorname{obj}_{\rm opt}$ \\

TSP
& $\operatorname{len}_{\rm opt} / \operatorname{len}_{\rm alg}$ \\
\bottomrule
\end{tabular}
\caption{
{\bf Normalized quality metrics used for evaluation.} Higher is better, and
optimal solutions have quality $1$. For maximization problems, quality is the
algorithm value divided by the optimum value; for minimization problems, it is
the optimum value divided by the algorithm value. Invalid or infeasible outputs
receive quality $0$.
}
\label{tab:quality_metrics}
\end{table}

{\bf Normalized quality.\enspace}
For each instance \(x\), let \(q_A(x)\) denote the normalized quality of solver
\(A\). For maximization problems, if \(A(x)\) is feasible and the optimum value
is positive, we set
\[
q_A(x)=\frac{\mathrm{val}_A(x)}{\mathrm{val}_{\rm opt}(x)}.
\]
For minimization problems, if \(A(x)\) is feasible and the returned cost is
positive, we set
\[
q_A(x)=\frac{\mathrm{cost}_{\rm opt}(x)}{\mathrm{cost}_A(x)}.
\]
Thus \(q_A(x)=1\) means the solver attains the optimum objective value, while
\(q_A(x)<1\) measures the multiplicative gap from optimum. Invalid, malformed, or
infeasible outputs are assigned \(q_A(x)=0\). In the benchmark distributions used
here, optimum objective values are positive, so the normalizations in
Table~\ref{tab:quality_metrics} are well-defined.

For Coloring, \(k_{\rm opt}\) is the optimal (minimum) number of colors for the
instance graph \(G\) and \(k_{\rm alg}\) is the number used by the returned
coloring. For MAXSAT, \(\mathrm{sat}_{\rm alg}\) is the number of clauses
satisfied by the returned assignment and \(\mathrm{sat}_{\rm opt}\) is the
maximum satisfiable number of clauses. For MIS, \(\mathrm{IS}_{\rm alg}\) and
\(\mathrm{IS}_{\rm opt}\) are the returned and maximum independent sets. For MDS,
\(\mathrm{DS}_{\rm alg}\) and \(\mathrm{DS}_{\rm opt}\) are the returned and
minimum dominating sets. For Packing LP and MDKP, \(\mathrm{obj}_{\rm alg}\) and
\(\mathrm{obj}_{\rm opt}\) are the returned and optimal objective values. For
TSP, \(\mathrm{len}_{\rm alg}\) is the length of the returned tour and
\(\mathrm{len}_{\rm opt}\) is the optimal tour length.

{\bf Optimality rate.\enspace}
The optimality rate is the fraction of held-out instances on which a solver
attains the optimum, i.e.\ for a test set \(\mathcal{D}_{\rm test}\),
\[
\frac{1}{|\mathcal{D}_{\rm test}|}
\sum_{x\in\mathcal{D}_{\rm test}}
\mathbf{1}\{q_A(x)=1\}.
\]
The equality check uses the benchmark verifier and the stored optimal objective
value, with an appropriate numerical tolerance for the continuous Packing LP
objectives. Invalid or infeasible outputs are not counted as optimal.

{\bf Runtime.\enspace}
Runtime is measured per instance in milliseconds. It includes the deployed solver
computation on the instance, together with any verification, repair, fallback, or
local-search steps used by the selected generated program, but excludes the
offline synthesis process that produced the solver.

{\bf Aggregation protocol.\enspace}
All reported aggregates are computed over the $21$ benchmark target
distributions, each weighted equally. For each target, we first average over
held-out test instances and repeated runs to obtain a single quality,
optimality, feasibility, and runtime value per method. Additive quantities
(quality, optimality, feasibility, and the quality lifts
\(\Delta Q_b := Q^{\rm ours}_{\rm GPT}-Q_b\)) are aggregated by arithmetic mean
over targets. Runtime is aggregated as ratios: for each target \(\tau\) and
comparator \(b\) we compute \(R_b(\tau)=T_b(\tau)/T^{\rm ours}_{\rm GPT}(\tau)\),
so \(R_b>1\) means our solver is faster, and we take the geometric mean over
targets (equivalently, the exponentiated mean of log-ratios). The comparator set
$b \in \{\mathrm{Gem@5}, \mathrm{Cod}, \mathrm{Cld}, \mathrm{avg}, \mathrm{Heur},
\mathrm{ML}\}$ and the fast high-quality heuristic $\mathrm{Heur}$ are defined in
Section~\ref{sec:experiments}.

\subsection{Baseline Catalog and Reporting Protocol}
\label{app:baseline_catalog_protocol}

The main text summarizes the baseline families used in our experiments. Here we
give the complete per-problem baseline catalog used in the benchmark. Each entry
specifies the problem class, solver family, backend, category, aliases when
applicable, and implementation notes. The benchmark protocol also fixes the
execution details, including formulations, solver wrappers, time limits, thread
settings, external-solver requirements, and alias mappings.

Table~\ref{tab:baseline_catalog} lists all classical baseline entries used by the
benchmark. The catalog contains $78$ registered entries across the seven problem
classes, corresponding to $72$ unique implementations after removing registry
aliases. We nevertheless list aliases explicitly because they are executable
entries in the registry; in particular, several problems include an
\texttt{exact} entry that aliases a problem-local exact implementation, such as
an OR-Tools formulation or the Held--Karp dynamic program. Beyond these classical
baselines, each problem class also has one learned ML baseline
(rightmost column) and shares the LLM synthesis baselines described below.

We distinguish three categories of classical baselines. \emph{Heuristic}
baselines are problem-specific procedures implemented locally in the benchmark,
such as greedy, local-search, insertion, rounding, and randomized rules.
\emph{Solver-backed} baselines are calls to general-purpose optimization software
under the benchmark protocol; Gurobi is reported separately because it runs with
a fixed $10$-second, single-thread budget and may return an incumbent rather than
a certified optimum. \emph{Exact} and \emph{external exact} baselines serve as
certified comparisons when they return certified optima under the evaluation
protocol.

Two further baseline groups are not problem-specific and so are not listed in
Table~\ref{tab:baseline_catalog}. The \emph{learned ML} baselines are one neural
solver per problem class, architecture-matched rather than a single generic
model: a physics-inspired GNN for coloring and MIS, a RUN-CSP-style constraint
network for MaxSAT, a GNN with reinforcement learning for MDS, an attention-based
construction model for TSP, and reinforcement-learning and primal--dual
construction networks for MDKP and Packing LP, respectively. The \emph{LLM
synthesis} baselines comprise the one-shot Codex and Claude Code agents and the
best-of-$5$ open-model (\texttt{gemma-4-31b-it}) variant; all are detailed in the
main text.

In the aggregate result tables, $\mathrm{Heur}$ denotes the \emph{fast
high-quality heuristic} (the fastest heuristic within $95\%$ of the best
heuristic quality for that class), $\mathrm{avg}$ the average over the heuristic
pool, $\mathrm{Gur}$ the time-limited Gurobi baseline, and $\mathrm{Exact}$ the
fastest certified exact solver attaining the best validated quality for that
problem. Every baseline is evaluated under the same public-instance protocol as
the LLM synthesis agent, receiving the benchmark instances but no access to the
hidden family-generation rule.

\begin{table}[t]
\centering
\scriptsize
\setlength{\tabcolsep}{4pt}
\renewcommand{\arraystretch}{1.1}
\begin{adjustbox}{max width=\linewidth,center}
\begin{tabular}{@{}p{0.10\linewidth}p{0.34\linewidth}p{0.37\linewidth}p{0.13\linewidth}@{}}
\toprule
\textbf{Problem} & \textbf{Heuristic baselines} & \textbf{Solver-backed and exact baselines} & \textbf{Learned ML} \\
\midrule

Coloring
& DSATUR and greedy coloring variants, including largest-degree, random-order,
and smallest-last orderings~\citep{brelaz1979new}
& OR-Tools CP-SAT coloring~\citep{perron2025ortools,perron2023cpsat};
\texttt{exact} alias; DSATUR branch-and-bound~\citep{brelaz1979new,land1960automatic};
Gurobi timed~\citep{gurobi2026}; HiGHS MIP~\citep{highs2024,huangfu2018parallelizing};
PySAT SAT-based; SCIP MIP~\citep{bestuzheva2023scip}
& PI-GNN~\citep{schuetz2022combinatorial} \\
\addlinespace[2pt]

MaxSAT
& Greedy flip local search; literal-majority polarity assignment; random Boolean assignment
& EvalMaxSAT~\citep{avellaneda2020evalmaxsat}; Gurobi timed~\citep{gurobi2026};
MaxHS~\citep{davies2013exploiting,davies2013postponing}; Open-WBO~\citep{martins2014openwbo};
PySAT RC2 with CaDiCaL 1.9.5, Glucose4, MiniSat 2.2, and the default RC2 backend~\citep{ignatiev2019rc2};
UWrMaxSAT~\citep{piotrow2020uwrmaxsat}; WMaxCDCL~\citep{coll2025wmaxcdcl}
& RUN-CSP~\citep{toenshoff2021runcsp} \\
\addlinespace[2pt]

MIS
& Local-improvement, minimum-degree, random greedy, and ratio-based greedy heuristics
& Clique branch-and-bound exact~\citep{land1960automatic}; OR-Tools CP-SAT~\citep{perron2025ortools,perron2023cpsat};
\texttt{exact} alias; Gurobi timed~\citep{gurobi2026}; HiGHS MIP~\citep{highs2024,huangfu2018parallelizing};
KaMIS exact via vertex cover~\citep{lamm2019exactly}; SCIP MIP~\citep{bestuzheva2023scip}
& PI-GNN~\citep{schuetz2022combinatorial} \\
\addlinespace[2pt]

MDS
& High-degree, marginal-gain, and redundancy-aware greedy dominating-set heuristics~\citep{chvatal1979greedy}
& CBC MIP~\citep{forrest2005cbc}; OR-Tools CP-SAT~\citep{perron2025ortools,perron2023cpsat};
\texttt{exact} alias; Gurobi timed~\citep{gurobi2026};
HiGHS MIP~\citep{highs2024,huangfu2018parallelizing}; SCIP MIP~\citep{bestuzheva2023scip};
set-cover branch-and-bound exact~\citep{chvatal1979greedy,land1960automatic}
& GNN${+}$RL~\citep{khalil2017learning} \\
\addlinespace[2pt]

Packing LP
& Density-based fractional packing and uniform-fraction LP heuristics~\citep{dantzig1957discrete}
& CLP LP~\citep{forrest2022clp}; OR-Tools GLOP simplex~\citep{perron2025ortools};
\texttt{exact} alias; Gurobi timed~\citep{gurobi2026};
HiGHS simplex and interior-point LP~\citep{highs2024,huangfu2018parallelizing};
SCIP LP~\citep{bestuzheva2023scip}
& Primal--dual net \\
\addlinespace[2pt]

MDKP
& LP-relaxation rounding, redundancy-improved greedy, and value-density greedy heuristics~\citep{dantzig1957discrete}
& Custom branch-and-bound~\citep{land1960automatic}; CBC MIP~\citep{forrest2005cbc};
OR-Tools CP-SAT~\citep{perron2025ortools,perron2023cpsat}; \texttt{exact} alias;
Gurobi timed~\citep{gurobi2026}; HiGHS MIP~\citep{highs2024,huangfu2018parallelizing};
SCIP MIP~\citep{bestuzheva2023scip}
& Deep RL agent \\
\addlinespace[2pt]

Euclidean TSP
& Random tour; nearest neighbor; nearest and farthest insertion; multi-start two-opt;
nearest-neighbor and farthest-insertion plus two-opt~\citep{lin1965computer,rosenkrantz1977analysis};
LKH~\citep{helsgaun2000effective}
& CBC MTZ MIP~\citep{forrest2005cbc}; Concorde~\citep{applegate2006traveling};
OR-Tools CP-SAT~\citep{perron2025ortools,perron2023cpsat};
Held--Karp DP~\citep{held1962dynamic}; \texttt{exact} alias;
Gurobi timed~\citep{gurobi2026}; SCIP MTZ MIP~\citep{bestuzheva2023scip}
& Attention model~\citep{kool2019attention} \\

\bottomrule
\end{tabular}
\end{adjustbox}
\caption{
{\bf Baseline catalog used in the benchmark registry.} For each problem class we
list the local heuristics, the time-limited solver-backed and exact/certifying
backends, and the single learned ML baseline. \texttt{exact} entries are registry
aliases for problem-local exact implementations. Heuristic citations refer to the
closest classical family or solver framework; exact implementation details are
defined in the registry. The one-shot Codex/Claude and best-of-$5$ Gemma~4
synthesis baselines are shared across all classes and are described in the main
text.
}
\label{tab:baseline_catalog}
\end{table}

\subsection{Compute and API Usage}
\label{app:compute}

Our primary synthesized solvers use GPT-5.2 Thinking through remote API calls,
with fixed high reasoning effort and structured outputs; temperature, top-$p$,
and maximum output length were left at service defaults. Across the $21$ target
distributions, GPT-5.2 synthesis used $463$ successful API calls and $16.1$M total
tokens (prompt plus completion), averaging $22.0$ calls and $767$K tokens per
distribution. Of the $16.1$M total, $5.95$M were prompt and $10.16$M completion
tokens, with $8.05$M reasoning tokens a subset of the completion tokens.

The open-model variants run the same pipeline on Gemma~4 (\texttt{gemma-4-31b-it})
hosted locally on a GPU rather than through a remote API: the full-pipeline Gemma
agent (Table~\ref{tab:gemma_vs_gpt}) and the best-of-$5$ variant, the latter using
$3.4$M tokens across the $21$ targets. Both were served on a single NVIDIA H100 NVL. Across all problem families,
including the PACE Dominating Set experiments, the main benchmark run took
approximately $4.12$ hours, and the Gemma~4 best-of-$5$ generation run took
approximately $1.9$ GPU-hours.

The deployed solvers themselves use no GPU, training, or fine-tuning: every
generated solver and every classical baseline is executed locally on CPU.
Experiments ran on a dual-socket AMD EPYC 9554 machine ($128$ physical cores,
$256$ logical CPUs, ${\approx}1.1$\,TiB RAM), parallelized across target
distributions. Summed over distributions, recorded synthesis-stage time was
$73.4$ hours and classical baseline pre-synthesis evaluation $417.3$ hours, with
a further ${\approx}0.6$ hours of local work inside synthesis (analysis
execution, solver construction, and train/validation evaluation). These are
aggregate recorded per-target stage times, not raw calendar elapsed time.

GPU use is confined to two non-deployed components, both on the single H100 NVL:
generation for the Gemma variants above, and the learned ML baseline suite, which
is trained per target (${\approx}6.0$ GPU-hours across all targets) and timed at
inference on both CPU and GPU (Section~\ref{sec:experiments}). These are one-time
offline costs for those baselines and are excluded from the reported runtime
ratios, exactly as our GPT-5.2 synthesis cost is excluded from ours.

\begin{table}[t]
\centering
\small
\setlength{\tabcolsep}{6pt}
\renewcommand{\arraystretch}{1.15}
\begin{tabular}{@{}lr@{}}
\toprule
\textbf{Quantity} & \textbf{Value} \\
\midrule
\multicolumn{2}{@{}l}{\emph{GPT-5.2 synthesis (remote API)}} \\
Target distributions                     & $21$ \\
Total API calls                          & $463$ \\
Mean API calls per distribution          & $22.0$ \\
Prompt / completion tokens               & $5.95$M / $10.16$M \\
Reasoning tokens (subset of completion)  & $8.05$M \\
Total tokens                             & $16.11$M \\
Mean tokens per distribution             & $767$K \\
\addlinespace[3pt]
\multicolumn{2}{@{}l}{\emph{Gemma~4 generation (local GPU)}} \\
Best-of-$5$ total tokens                 & $3.39$M \\
Gemma generation time                    & ${\approx}1.9$\,GPU-h \\
\addlinespace[3pt]
\multicolumn{2}{@{}l}{\emph{CPU stage time}} \\
Synthesis stage                          & $73.4$\,h \\
Classical baseline pre-synthesis         & $417.3$\,h \\
Other local work inside synthesis        & ${\approx}0.6$\,h \\
\addlinespace[3pt]
\multicolumn{2}{@{}l}{\emph{Hardware}} \\
CPU                                      & $2\times$ AMD EPYC 9554 \\
Physical / logical CPUs                  & $128 / 256$ \\
RAM                                      & ${\approx}1.1$\,TiB \\
GPU (Gemma + ML baselines)               & $1\times$ NVIDIA H100 NVL \\
Stored benchmark artifacts               & $83$\,GB \\
\addlinespace[3pt]
\multicolumn{2}{@{}l}{\emph{ML baselines}} \\
ML training (all targets)                & ${\approx}6.0$\,GPU-h \\
\midrule
GPU use by deployed solvers              & None \\
\bottomrule
\end{tabular}
\caption{
\textbf{Compute summary.}
Our GPT-5.2 synthesis runs through a remote API; the Gemma variants and the learned
ML baselines are the GPU components, both served on a single H100 NVL. All
deployed solvers and classical baselines run on CPU. Timing values are aggregate
recorded per-target stage times.
}
\label{tab:compute_summary}
\end{table}

\section{Additional Empirical Results}
\label{app:additional_results}

\subsection{Full Per-Target Results}
\label{app:full_results}

\begin{table}[t]
\centering
\tiny
\setlength{\tabcolsep}{2.0pt}
\renewcommand{\arraystretch}{1.18}
\definecolor{llmshade}{RGB}{242,247,255}

\begin{adjustbox}{max width=\linewidth,center}
\begin{tabular}{@{}ll *{3}{c} *{3}{c} *{3}{c} *{3}{c} *{3}{c} >{\columncolor{llmshade}}c >{\columncolor{llmshade}}c >{\columncolor{llmshade}}c@{}}
\toprule
& &
\multicolumn{3}{c}{Quality-best heur.}
& \multicolumn{3}{c}{Avg.\ heuristic}
& \multicolumn{3}{c}{ML baseline}
& \multicolumn{3}{c}{Gurobi ($10$s)}
& \multicolumn{3}{c}{Time-limited exact}
& \multicolumn{3}{c}{\bfseries LLM synthesis} \\
\cmidrule(lr){3-5}\cmidrule(lr){6-8}\cmidrule(lr){9-11}\cmidrule(lr){12-14}\cmidrule(lr){15-17}\cmidrule(lr){18-20}
Problem & Target distribution
& $Q$ & opt. & $T$
& $Q$ & opt. & $T$
& $Q$ & opt. & $T$
& $Q$ & opt. & $T$
& $Q$ & opt. & $T$
& $Q$ & opt. & $T$ \\
\midrule

Coloring & Ring-template
& $.746$ & $.002$ & $3699$
& $.655$ & $.001$ & $3635$
& $.992$ & $.962$ & $2.2$
& $.976$ & $.882$ & $4136$
& $1.00$ & $1.00$ & $31$
& $1.00$ & $1.00$ & $0.2$ \\

& Overlapping-palette
& $.747$ & $.004$ & $3476$
& $.651$ & $.001$ & $3587$
& $.999$ & $.997$ & $1.9$
& $.878$ & $.392$ & $8366$
& $1.00$ & $1.00$ & $80$
& $.804$ & $.022$ & $24$ \\

& Separator-trap
& $.749$ & $.000$ & $3616$
& $.648$ & $.000$ & $3588$
& $.995$ & $.974$ & $2.1$
& $.933$ & $.666$ & $6877$
& $1.00$ & $1.00$ & $99$
& $.800$ & $.000$ & $4$ \\

\addlinespace[2pt]

MAXSAT & Community-parity
& $.914$ & $.000$ & $3733$
& $.862$ & $.000$ & $4368$
& $1.00$ & $.986$ & $21$
& $1.00$ & $.992$ & $8712$
& $1.00$ & $1.00$ & $26$
& $1.00$ & $.876$ & $24$ \\

& Last-clause signal
& $.932$ & $.000$ & $3645$
& $.880$ & $.000$ & $4492$
& $1.00$ & $1.00$ & $14$
& $1.00$ & $1.00$ & $8325$
& $1.00$ & $1.00$ & $8$
& $1.00$ & $1.00$ & $6$ \\

& Latent-backdoor
& $.931$ & $.000$ & $3791$
& $.891$ & $.000$ & $3615$
& $1.00$ & $.999$ & $20$
& $1.00$ & $.996$ & $2461$
& $1.00$ & $1.00$ & $27$
& $1.00$ & $.896$ & $38$ \\

\addlinespace[2pt]

MIS & Clique-path
& $.876$ & $.040$ & $10000$
& $.833$ & $.018$ & $10000$
& $.996$ & $.965$ & $1.6$
& $1.00$ & $1.00$ & $2405$
& $1.00$ & $1.00$ & $1355$
& $.991$ & $.594$ & $15$ \\

& Core-fringe trap
& $.908$ & $.908$ & $10000$
& $.664$ & $.451$ & $10000$
& $1.00$ & $1.00$ & $1.6$
& $1.00$ & $1.00$ & $1555$
& $1.00$ & $1.00$ & $145$
& $1.00$ & $1.00$ & $8$ \\

& Motif-bridge
& $.874$ & $.044$ & $10000$
& $.826$ & $.017$ & $10000$
& $.988$ & $.878$ & $1.6$
& $.998$ & $.890$ & $6757$
& $.994$ & $.994$ & $2136$
& $.986$ & $.384$ & $57$ \\

\addlinespace[2pt]

MDS & Gateway-hub
& $.839$ & $.000$ & $10000$
& $.811$ & $.000$ & $10000$
& $.935$ & $.702$ & $6.6$
& $1.00$ & $1.00$ & $8099$
& $1.00$ & $1.00$ & $113$
& $.920$ & $.000$ & $5$ \\

& Geometric-anchor
& $.784$ & $.298$ & $8706$
& $.748$ & $.199$ & $8726$
& $.945$ & $.790$ & $4.5$
& $1.00$ & $1.00$ & $3083$
& $1.00$ & $1.00$ & $2093$
& $1.00$ & $1.00$ & $130$ \\

& Star-kernel
& $.930$ & $.930$ & $10000$
& $.918$ & $.918$ & $10000$
& $.977$ & $.959$ & $5.8$
& $1.00$ & $1.00$ & $8162$
& $1.00$ & $1.00$ & $52$
& $1.00$ & $1.00$ & $3$ \\

\addlinespace[2pt]

Packing LP & Block-coupled
& $.792$ & $.000$ & $10000$
& $.718$ & $.000$ & $10000$
& $.796$ & $.000$ & $1.0$
& $1.00$ & $1.00$ & $8628$
& $1.00$ & $1.00$ & $125$
& $1.00$ & $1.00$ & $1.5$ \\

& Active-resource
& $.610$ & $.000$ & $10000$
& $.572$ & $.000$ & $10000$
& $.724$ & $.000$ & $1.0$
& $1.00$ & $1.00$ & $12676$
& $1.00$ & $1.00$ & $124$
& $.981$ & $.000$ & $17$ \\

& Single-bottleneck
& $.803$ & $.000$ & $10000$
& $.788$ & $.000$ & $10000$
& $.903$ & $.000$ & $1.0$
& $1.00$ & $1.00$ & $7636$
& $1.00$ & $1.00$ & $122$
& $1.00$ & $1.00$ & $1.4$ \\

\addlinespace[2pt]

MDKP & Decoy-complement
& $.967$ & $.016$ & $5330$
& $.558$ & $.005$ & $6499$
& $.982$ & $.319$ & $6.5$
& $1.00$ & $1.00$ & $2499$
& $1.00$ & $1.00$ & $1084$
& $.951$ & $.000$ & $41$ \\

& Latent-class
& $.966$ & $.010$ & $3660$
& $.867$ & $.003$ & $3793$
& $.981$ & $.291$ & $6.5$
& $1.00$ & $.746$ & $4955$
& $1.00$ & $.544$ & $6306$
& $.968$ & $.000$ & $116$ \\

& Single-resource
& $.960$ & $.666$ & $4263$
& $.851$ & $.222$ & $5585$
& $.958$ & $.019$ & $4.2$
& $1.00$ & $1.00$ & $2593$
& $1.00$ & $1.00$ & $233$
& $1.00$ & $1.00$ & $176$ \\

\addlinespace[2pt]

TSP & Clustered Euclidean
& $1.00$ & $.910$ & $1544$
& $.653$ & $.114$ & $8943$
& $.782$ & $.000$ & $888$
& $1.00$ & $.930$ & $1553$
& $1.00$ & $.662$ & $136$
& $.987$ & $.002$ & $67$ \\

& Latent-metric
& $1.00$ & $.848$ & $1490$
& $.657$ & $.228$ & $8936$
& $.970$ & $.282$ & $891$
& $1.00$ & $.966$ & $1566$
& $1.00$ & $.758$ & $118$
& $.992$ & $.306$ & $235$ \\

& Paired-ribbon zigzag
& $1.00$ & $1.00$ & $51$
& $.626$ & $.402$ & $8756$
& $.976$ & $.700$ & $888$
& $1.00$ & $1.00$ & $2071$
& $.499$ & $.000$ & $10993$
& $1.00$ & $1.00$ & $0.2$ \\

\bottomrule
\end{tabular}
\end{adjustbox}
\caption{
{\bf Per-target held-out test results over all $21$ benchmark target
distributions.} $Q$ is normalized quality (higher better), opt.\ the fraction of
instances solved to optimality, and $T$ the mean per-instance wall-clock runtime
in milliseconds. \emph{Quality-best heur.}\ selects, within each target, the
single heuristic with highest quality (ties broken by optimality then runtime);
\emph{Avg.\ heuristic} averages over the heuristic pool; both heuristic runtimes
are clipped at $10{,}000$\,ms. The \emph{ML baseline} is the per-class neural
solver, timed on GPU. \emph{Time-limited exact} is the strongest certifying
backend under the time limit, by the same quality-first rule. Gurobi and the
exact backends use a $10$\,s solver limit; reported $T$ is measured wall-clock and
may include wrapper overhead. Note that the per-target ``quality-best heuristic''
here differs from the \emph{fast high-quality} heuristic $\mathrm{Heur}$ used in
Table~\ref{tab:headline_results}, which trades a small quality margin for speed.
}
\label{tab:full_results}
\end{table}

Table~\ref{tab:full_results} reports the full held-out test results behind the
headline summary in Table~\ref{tab:headline_results}, giving per-target
normalized quality, optimality rate, and mean runtime for every baseline group.
Two points are worth noting when reading it against the headline table. First,
the heuristic column here is the \emph{quality-best} heuristic per target,
whereas the headline runtime ratio $R_{\rm Heur}$ uses the \emph{fast
high-quality} heuristic (the fastest within $95\%$ of best quality), so the two
heuristic runtimes are not directly comparable. Second, the ML baseline is fast
and high-quality on several graph families but degrades on Packing LP and TSP,
consistent with its aggregate position in Table~\ref{tab:headline_results}. The
aggregate quality lifts and runtime ratios in the headline table are computed
directly from these $21$ target-level values.

\subsection{PACE 2025 Dominating Set Comparison}
\label{app:pace_ds}

We also evaluate on the PACE 2025 Dominating Set heuristic track, where the task
is to output a dominating set in an undirected graph and smaller solutions are
better. PACE released $100$ public heuristic instances and, after the
competition, a private evaluation set of $100$ instances stated by the organizers
to be similar to the public ones. We use the public set as our development
distribution, split $50$/$50$ into training and validation, and report on the
released private set. This is a local comparison against released PACE solvers
and instances, not an official PACE score.

The private graphs are large and sparse, ranging from tens of thousands to over
four million vertices (median ${\approx}6.5\times10^5$ vertices,
${\approx}9.4\times10^5$ edges). Table~\ref{tab:pace_graph_stats} summarizes
their sizes; this scale is the reason the learned ML baselines do not apply here,
as the graphs exceed what those models can process.

\begin{table}[t]
\centering
\small
\renewcommand{\arraystretch}{1.15}
\begin{adjustbox}{max width=\linewidth}
\begin{tabular}{@{}lrrrrr@{}}
\toprule
\textbf{Quantity} & \textbf{Min.} & \textbf{25th pct.} & \textbf{Median} & \textbf{75th pct.} & \textbf{Max.} \\
\midrule
Vertices $|V|$
& $18{,}046$ & $21{,}400$ & $653{,}867$ & $1{,}261{,}963$ & $4{,}221{,}675$ \\
Edges $|E|$
& $54{,}138$ & $382{,}083$ & $939{,}263$ & $1{,}831{,}990$ & $10{,}356{,}300$ \\
Avg.\ degree $2|E|/|V|$
& $2.28$ & $2.57$ & $3.74$ & $13.00$ & $48.00$ \\
\bottomrule
\end{tabular}
\end{adjustbox}
\caption{
\textbf{PACE private-test graph statistics,} over the $100$ released private
Dominating Set heuristic instances.
}
\label{tab:pace_graph_stats}
\end{table}

For each private instance we run our synthesized solver and compare it against the
released PACE heuristic solvers (Fontan--Verger, Root, Shadoks, AEG Heidelberg,
Greeduce, and Swats), verifying validity of every returned solution. Since this
is a minimization problem, a solver has higher quality when it returns a smaller
dominating set. Runtime is measured locally.

\begin{table}[t]
\centering
\small
\setlength{\tabcolsep}{5pt}
\renewcommand{\arraystretch}{1.15}
\caption{
\textbf{PACE 2025 Dominating Set comparison (per-solver detail).}
Lower size and runtime are better. Our synthesized solver is valid on all $100$
private instances and runs roughly two orders of magnitude faster than the
released PACE solvers, which in turn return ${\approx}3\%$ smaller dominating
sets. ``Agent/solver size'' is the matched total-size ratio (${>}1$: the PACE
solver is smaller); ``Speedup'' is the runtime ratio relative to our solver.
Swats is valid on only $75/100$ instances, so its size, ratio, speedup, and
quality wins are computed on that matched subset.
}
\label{tab:pace_runtime_quality}
\begin{adjustbox}{max width=\linewidth}
\begin{tabular}{@{}l c r r c r r c@{}}
\toprule
Solver
& Valid
& Total size $\downarrow$
& Avg.\ size $\downarrow$
& \shortstack[r]{Agent/solver\ size}
& \shortstack[r]{Avg.\ runtime(s) $\downarrow$}
& Speedup
& \shortstack[r]{Quality wins vs.\ agent} \\
\midrule
\rowcolor{ourpurple}
Agent (ours)
& $100/100$ & $23.16$M & $231{,}595$ & $1.000$ & $2.89$ & $1.0\times$ & -- \\
\midrule
AEG Heidelberg~\cite{aegheidelberg2025pace}
& $100/100$ & $22.41$M & $224{,}086$ & $1.034$ & $350.14$ & $121.0\times$ & $99/100$ \\
Fontan--Verger~\cite{fontan_et_al:LIPIcs.IPEC.2025.36}
& $100/100$ & $22.41$M & $224{,}107$ & $1.033$ & $286.24$ & $98.9\times$ & $100/100$ \\
Root~\cite{root_pace2025_solver}
& $100/100$ & $22.41$M & $224{,}108$ & $1.033$ & $360.42$ & $124.5\times$ & $100/100$ \\
Shadoks~\cite{dafonseca_et_al:LIPIcs.IPEC.2025.34}
& $100/100$ & $22.43$M & $224{,}306$ & $1.032$ & $316.07$ & $109.2\times$ & $100/100$ \\
Greeduce~\cite{polak2025greeduce}
& $100/100$ & $22.47$M & $224{,}699$ & $1.031$ & $300.86$ & $104.0\times$ & $91/100$ \\
Swats~\cite{swat:LIPIcs.IPEC.2025.38}
& $75/100$  & $15.77$M & $210{,}237$ & $1.028$ & $218.11$ & $75.4\times$  & $75/75$ \\
\bottomrule
\end{tabular}
\end{adjustbox}
\end{table}

The comparison shows a clear speed--quality tradeoff. Our solver does not match
the strongest PACE submissions in solution quality: every valid PACE baseline
returns a smaller dominating set on essentially every matched instance. The gap
is nonetheless moderate---about $3.3\%$ larger total size than the top released
solvers---while our average runtime is roughly $100\times$ smaller. Two solvers
qualify this picture in our favor: Greeduce beats us on only $91$ of $100$
instances, and Swats returns a verified-valid solution on only $75$ of $100$, so
its apparent quality edge is measured on a smaller matched set. On this benchmark
the synthesized solver thus discovers a very fast, fully valid
distribution-specific heuristic that trades a few percent of solution quality for
two orders of magnitude in runtime---a favorable point on the frontier, though
not one that matches the quality of these highly engineered competition solvers.

\subsection{Distributional Algorithmic Complexity}
\label{app:distributional_complexity}

\begin{table*}[t]
\centering
\scriptsize
\setlength{\tabcolsep}{4pt}
\renewcommand{\arraystretch}{1.25}
\definecolor{ambientcol}{RGB}{248,248,248}
\caption{
{\bf Distribution-aware computation across all $21$ benchmark distributions.}
The middle column is the generic ambient search a solver must perform to remain
valid on arbitrary instances; the right column is the bounded, distribution-specific
computation the generated solver uses instead. Symbols $B,R,P,F,S,K$ are fixed
implementation budgets (defined in text), independent of input size. Measured
runtimes for the corresponding exact backends appear in
Table~\ref{tab:full_results}.
}
\label{tab:dac_all_distributions}
\begin{adjustbox}{max width=\linewidth,center}
\begin{tabular}{@{}l l >{\columncolor{ambientcol}}l l@{}}
\toprule
\textbf{Problem} & \textbf{Distribution}
& \cellcolor{white}\textbf{Ambient computation}
& \textbf{Generated-solver computation} \\
\midrule

\multirow{3}{*}{Coloring}
& Ring-template        & $O^*(\kappa^n)$ & $O(m)$ \,/\, $O(n^2+m+B_{\rm rep}\Delta_{\max})$ \\
& Overlapping-palette  & $O^*(\kappa^n)$ & $O\!\left(n+m+R_{\rm ds}(n^2+m)+P_{\rm elim}\kappa(n+m)\right)$ \\
& Separator-trap       & $O^*(\kappa^n)$ & $O\!\left(n+m+R_{\rm ds}(n^2+m)+B_{\rm recolor}\right)$ \\
\cmidrule(l{2pt}r{2pt}){1-4}

\multirow{3}{*}{MAXSAT}
& Community-parity     & $O^*(2^v)$ & $O\!\left(|F|\ell+R_{\rm ws}B_{\rm flip}\ell\Delta_{\rm occ}\right)$ \\
& Last-clause signal   & $O^*(2^v)$ & $O\!\left(|F|\ell+R_{\rm ws}B_{\rm flip}\ell\Delta_{\rm occ}+B_{\rm patch}\ell\Delta_{\rm occ}\right)$ \\
& Latent-backdoor      & $O^*(2^v)$ & $O\!\left(|F|\ell+R_{\rm ws}B_{\rm flip}\ell\Delta_{\rm occ}\right)$ \\
\cmidrule(l{2pt}r{2pt}){1-4}

\multirow{3}{*}{MIS}
& Clique-path          & $O^*(2^n)$ & $O\!\left(n+m+P_{\rm ord}n\log n+P_{\rm gr}(n+m)+T_{\rm ARW}\right)$ \\
& Core-fringe trap     & $O^*(2^n)$ & $O\!\left(n+m+\textstyle\sum_j 4^{t_j}+kq\right)$ \\
& Motif-bridge         & $O^*(2^n)$ & $O\!\left(n+m+P_{\rm gr}(n+m)+B_{\rm loc}(n+m)+B_{\rm kick}(n+m)+T_{\rm tiny}\right)$ \\
\cmidrule(l{2pt}r{2pt}){1-4}

\multirow{3}{*}{MDS}
& Gateway-hub          & $O^*(2^n)$ & $O\!\left(n+m+T_{\rm heap\text{-}greedy}+T_{\rm prune}\right)$ \\
& Geometric-anchor     & $O^*(2^n)$ & $O(n+m)$ \,/\, $O\!\left((1+B_{\rm geo})(n+m)\right)$ \\
& Star-kernel          & $O^*(2^n)$ & $O\!\left(n+m+\textstyle\sum_j c_j2^{t_j}+T_{\rm greedy}+T_{\rm prune}\right)$ \\
\cmidrule(l{2pt}r{2pt}){1-4}

\multirow{3}{*}{Packing LP}
& Block-coupled        & $T_{\rm LP}(N,r,L)$ & $O(Nr+N\log N)$ \\
& Active-resource      & $T_{\rm LP}(N,r,L)$ & $O\!\left(Nr+P_{\rm LP}(Nr+N\log N)\right)$ \\
& Single-bottleneck    & $T_{\rm LP}(N,r,L)$ & $O(Nr+N\log N)$ \\
\cmidrule(l{2pt}r{2pt}){1-4}

\multirow{3}{*}{MDKP}
& Decoy-complement     & $O^*(2^N)$ & $O\!\left(Nr+P_{\rm score}(Nr+N\log N)+T_{\rm add/drop}+T_{\rm repair}\right)$ \\
& Latent-class         & $O^*(2^N)$ & $O\!\left(Nr+N\log N+P_{\rm price}T_{\rm iter}(Nr+N\log N+T_{\rm repair})\right)$ \\
& Single-resource      & $O^*(2^N)$ & $O\!\left(Nr+N\log N+K_{\rm cand}C_b+T_{\rm repair}\right)$ \\
\cmidrule(l{2pt}r{2pt}){1-4}

\multirow{3}{*}{TSP}
& Clustered Euclidean  & $O(n^2 2^n)$ & $O\!\left(n^2\log n+B_{\rm cand}n^2+B_{\rm full}n^2\right)$ \\
& Latent-metric        & $O(n^2 2^n)$ & $O\!\left(n^2+F(kn^2+n\log n)+S(2^k k+kn)+P_{\rm imp}B_{\rm imp}n^2\right)$ \\
& Paired-ribbon zigzag & $O(n^2 2^n)$ & $O(n\log n)$ \,/\, $O\!\left((1+B_{\rm 2opt})n^2\right)$ \\

\bottomrule
\end{tabular}
\end{adjustbox}
\end{table*}

This appendix details the computational mechanisms summarized in
Table~\ref{tab:dac_all_distributions}. The comparison is distributional, not
worst-case: we do not claim the generated programs improve the worst-case
complexity of any ambient problem class. Rather, each program exploits
regularities of its benchmark distribution to replace a generic ambient
computation with a smaller, distribution-specific one on instances drawn from
that distribution.

A generic exact baseline must stay valid on arbitrary instances, so it searches
the full ambient space: color assignments for Coloring, Boolean assignments for
MaxSAT, vertex subsets for MIS and MDS, multidimensional item subsets for MDKP,
or tours for TSP. Packing LP is the exception---its ambient problem is
polynomial-time solvable, but a generic LP backend still solves the full
$N$-variable, $r$-constraint program. A generated solver, selected after seeing
samples from a fixed distribution, can instead verify a template, seed a local
search, restrict a candidate set, apply a surrogate score, or build a small set
of structured candidates. This changes the deployed computation without changing
the ambient worst-case complexity, and it is the source of every speedup in the
table.

{\bf Notation.\enspace}
Graph problems use $n$ vertices, $m$ edges, and maximum degree $\Delta_{\max}$,
with $\kappa$ the colors considered by the ambient Coloring search; MaxSAT uses
$v$ variables, $|F|$ clauses, maximum clause width $\ell$, and maximum
variable-occurrence count $\Delta_{\rm occ}$; Packing LP and MDKP use $N$ items,
$r$ resource constraints, and input bit-length $L$; TSP uses $n$ cities. The
symbols $B$, $R$, $P$, $F$, $S$, $K$ denote \emph{fixed implementation budgets}
(repair steps, restarts, local-search rounds, coordinate frames, candidate
families, pricing variants, improvement passes, or bounded candidate pools), set
by the generated program or analysis file rather than by the input size; we keep
them visible because they identify the bounded computation that replaces the
ambient search. The per-problem budgets are:

\begin{itemize}[leftmargin=1.4em,itemsep=1pt,topsep=2pt]
\item \emph{Coloring:} $B_{\rm rep}$ (repair), $R_{\rm ds}$ (DSATUR-style
restarts), $P_{\rm elim}$ (color-elimination passes), $B_{\rm recolor}$ (capped
recoloring).
\item \emph{MaxSAT:} $R_{\rm ws}$ (WalkSAT-style restarts), $B_{\rm flip}$
(flips per restart), $B_{\rm patch}$ (patch-search).
\item \emph{MIS:} $P_{\rm ord}$ (ordering rules), $P_{\rm gr}$ (greedy
constructions), $B_{\rm loc}$ (local search), $B_{\rm kick}$ (kick-repair), and
$T_{\rm ARW},T_{\rm tiny}$ (bounded ARW-style and tiny time-limited
subroutines); for Core-fringe, $t_j$ is the size of residual fringe component
$j$, $k$ the number of core choices, $q$ the candidate configurations scored.
\item \emph{MDS:} $T_{\rm heap\text{-}greedy}$, $T_{\rm greedy}$,
$T_{\rm prune}$ (greedy and pruning subroutines), $B_{\rm geo}$
(geometric-anchor completion); for Star-kernel, $t_j$ is the residual targets in
component $j$ and $c_j$ the candidate covers for it.
\item \emph{Packing LP:} $P_{\rm LP}$ (density / active-resource rules).
\item \emph{MDKP:} $P_{\rm score}$ (surrogate scores), $P_{\rm price}$
(resource-price vectors), $T_{\rm iter}$ (one price-update and repair iteration),
$K_{\rm cand}$ (candidate-set size), $C_b$ (effective one-dimensional bottleneck
capacity), and $T_{\rm add/drop},T_{\rm repair}$ (bounded improvement and
repair).
\item \emph{TSP:} $B_{\rm cand},B_{\rm full}$ (candidate and full 2-opt), $F$
(coordinate frames), $k$ (stripes/groups), $S$ (selected stripe frames),
$P_{\rm imp}$ (tours improved), $B_{\rm imp}$ (passes per tour), $B_{\rm 2opt}$
(paired-ribbon fallback).
\end{itemize}

{\bf Graph coloring.\enspace}
The ambient search is over $O^*(\kappa^n)$ color assignments. The generated
solvers exploit the fact that the benchmark graphs are not arbitrary, and their
fallback path is a DSATUR routine that scans all uncolored vertices at each step,
costing $T_{\rm DSATUR}(n,m,\kappa)=O(n^2+m+n\kappa)$, i.e.\ $O(n^2+m)$ when
$\kappa\le n$. In \emph{Ring-template}, the dominant fast path is template
checking: the solver verifies a learned coloring pattern against the edge set in
$O(m)$, and on failure uses bounded repair or the greedy fallback, giving
$O(n^2+m+B_{\rm rep}\Delta_{\max})$. Most instances are thus handled by
verification and bounded repair rather than assignment search. In
\emph{Overlapping-palette} and \emph{Separator-trap}, the solvers do not reduce
to pure template checking but run a bounded number $R_{\rm ds}$ of DSATUR-style
attempts with cleanup: color-elimination passes for the former,
\[
O\!\left(n+m+R_{\rm ds}(n^2+m)+P_{\rm elim}\kappa(n+m)\right),
\]
and capped recoloring repair for the latter,
\[
O\!\left(n+m+R_{\rm ds}(n^2+m)+B_{\rm recolor}\right).
\]
These are bounded polynomial-time heuristics, but they replace exponential search
with a few structured attempts.

{\bf MaxSAT.\enspace}
The ambient search is over $2^v$ Boolean assignments. The generated solvers use
distributional statistics to construct a strong initial assignment, then run
bounded local repair: building occurrence lists and the seed costs $O(|F|\ell)$,
and each flip costs $O(\ell\Delta_{\rm occ})$ with maintained clause counts. The
\emph{Community-parity} and \emph{Latent-backdoor} rows therefore cost
\[
O\!\left(|F|\ell+R_{\rm ws}B_{\rm flip}\ell\Delta_{\rm occ}\right),
\]
while \emph{Last-clause signal} adds a patch term
$O(B_{\rm patch}\ell\Delta_{\rm occ})$ for repairing variables around remaining
unsatisfied clauses. The advantage is not a new MaxSAT algorithm: the
distributional seed lands close enough to a high-quality assignment that bounded
local search usually suffices.

{\bf Maximum independent set.\enspace}
The ambient search is over $2^n$ vertex subsets. The selected solvers build small
pools of candidate independent sets from distribution-suggested orderings or
decompositions, then apply bounded local improvement or small residual
enumeration. \emph{Clique-path} uses analysis-guided orders, greedy maximal
sets, restarts, and ARW-style improvement,
\[
O\!\left(n+m+P_{\rm ord}n\log n+P_{\rm gr}(n+m)+T_{\rm ARW}\right).
\]
\emph{Core-fringe trap} decomposes the low-degree fringe into tiny components,
enumerates exact choices inside each, and scores a bounded set of core decisions,
\[
O\!\left(n+m+\textstyle\sum_j 4^{t_j}+kq\right),
\]
so the exponential term is over tiny residual pieces, not the full graph.
\emph{Motif-bridge} uses greedy constructions, local improvement, kick moves, and
an optional tiny exact step,
\[
O\!\left(n+m+P_{\rm gr}(n+m)+B_{\rm loc}(n+m)+B_{\rm kick}(n+m)+T_{\rm tiny}\right).
\]

{\bf Minimum dominating set.\enspace}
The ambient search is over $2^n$ subsets. \emph{Gateway-hub} relies on
high-degree hubs and leaf/isolate structure, using greedy completion and
redundant-vertex pruning,
$O(n+m+T_{\rm heap\text{-}greedy}+T_{\rm prune})$, so much of the dominating set
is apparent without enumeration. \emph{Geometric-anchor} checks a small
residue-class anchor pattern on an $O(n+m)$ fast path, falling back to bounded
greedy completion, $O((1+B_{\rm geo})(n+m))$. \emph{Star-kernel} uses forced
leaf-neighbor structure and then solves only tiny residual components, with
residual work $\sum_j c_j 2^{t_j}$ that is exponential only in the tiny residual
size, not in $n$.

{\bf Packing LP.\enspace}
Here the ambient problem is already polynomial-time, so the comparison is between
a generic LP solve $T_{\rm LP}(N,r,L)$ and a specialized sorting-based
computation that exploits the small number of dominant or active resources. In
\emph{Block-coupled} and \emph{Single-bottleneck}, the solver scores items by
density against the effective resource and sorts, $O(Nr+N\log N)$ (the $Nr$ from
feasibility checks across constraints, the $N\log N$ from sorting). In
\emph{Active-resource}, it evaluates a portfolio of $P_{\rm LP}$ density rules,
$O(Nr+P_{\rm LP}(Nr+N\log N))$. The gain is a deployment advantage on these
distributions, not a worst-case improvement in the abstract LP model: a small
density portfolio captures the active-resource structure without a
general-purpose optimizer.

{\bf Multidimensional knapsack.\enspace}
The ambient baseline searches $2^N$ item subsets. The generated solvers replace
this with surrogate scoring, candidate restriction, bounded DP, and repair.
\emph{Decoy-complement} scores, greedily packs, and applies bounded add/drop and
repair,
\[
O\!\left(Nr+P_{\rm score}(Nr+N\log N)+T_{\rm add/drop}+T_{\rm repair}\right).
\]
\emph{Latent-class} iterates adaptive price vectors with surrogate packing,
\[
O\!\left(Nr+N\log N+P_{\rm price}T_{\rm iter}(Nr+N\log N+T_{\rm repair})\right),
\]
and \emph{Single-resource} identifies a dominant resource and runs
one-dimensional DP on a bounded candidate set,
\[
O\!\left(Nr+N\log N+K_{\rm cand}C_b+T_{\rm repair}\right).
\]
These are bounded heuristics or restricted DPs, searching a small number of
distributionally meaningful candidates rather than the full subset space.

{\bf Traveling salesman problem.\enspace}
A generic exact DP such as Held--Karp costs $O(n^2 2^n)$. The generated solvers
instead exploit geometric structure to generate a small set of candidate tours
and improve them with bounded local search. \emph{Clustered Euclidean} builds
geometric or nearest-neighbor tours and runs bounded candidate- and full-2-opt,
$O(n^2\log n+B_{\rm cand}n^2+B_{\rm full}n^2)$. \emph{Latent-metric} evaluates
$F$ coordinate frames with $k$ groups ($F(kn^2+n\log n)$), solves a small
stripe-order subproblem over the top $S$ frames ($S(2^k k+kn)$, with $k$ fixed
and small, so not exponential in $n$), and improves $P_{\rm imp}$ tours for
$B_{\rm imp}$ passes:
\[
O\!\left(n^2+F(kn^2+n\log n)+S(2^k k+kn)+P_{\rm imp}B_{\rm imp}n^2\right).
\]
\emph{Paired-ribbon zigzag} detects the two-rail geometry, sorts points along
the rails, and pairs endpoints on an $O(n\log n)$ fast path, falling back to
nearest-neighbor plus bounded 2-opt, $O(n^2+B_{\rm 2opt}n^2)$. The ambient
$O(n^2 2^n)$ comparison stands, but for instances matching the ribbon structure
the solver avoids general tour search and most quadratic local-search work.

{\bf Summary.\enspace}
The generated solvers win when training samples reveal reusable low-dimensional,
local, or template structure: Coloring becomes template verification plus bounded
repair; MaxSAT becomes seeded local search; MIS and MDS become greedy
construction plus tiny residual enumeration or bounded improvement; Packing LP
becomes density sorting over active resources; MDKP becomes surrogate pricing,
restricted one-dimensional DP, or bounded repair; and TSP becomes structured
candidate generation plus bounded 2-opt, or near-sorting on the paired-ribbon
fast path. The limitation is equally important: these are not new worst-case
algorithms, several rows are bounded heuristics, and some rely on fallback or
repair. The faithful claim is distributional and mechanistic---on these
distributions, the generated programs replace generic exact search over a large
ambient space with the smaller computations of
Table~\ref{tab:dac_all_distributions}.

\subsection{Zero-Sample versus Sample-Conditioned Synthesis}
\label{app:with_without_data}

We isolate the contribution of public samples with a paired ablation. The
zero-sample variant runs the same synthesis pipeline with no public training
instances; the sample-conditioned variant uses the default $n=64$ training
instances from our main experiments (we fix this rather than tuning the sample
size). Each row averages $10$ paired synthesis seeds, so the two variants are
compared under matched randomness.

Table~\ref{tab:with_without_data_all} shows quality is near-saturated under both
variants, so the observable effect of sample access on these distributions is
not feasibility or final quality but \emph{deployment runtime}. The
sample-conditioned solver is at least as fast on $10/12$ targets and strictly
faster on $9/12$, with large gains on Single-resource MDKP ($36\times$),
Paired-ribbon zigzag TSP ($18\times$), and Separator-trap Coloring ($6\times$).
The one substantial regression is Decoy-complement MDKP, where the
sample-conditioned solver is slower ($0.36\times$) despite a marginal quality
gain. We read this as evidence for sample-conditioned compilation: public samples
often help synthesis discover faster deployment code even when zero-sample
synthesis already attains high quality---though the effect is
distribution-dependent rather than a monotone function of sample count.

\begin{table}[t]
\centering
\small
\setlength{\tabcolsep}{5pt}
\renewcommand{\arraystretch}{1.15}
\begin{adjustbox}{max width=\linewidth,center}
\begin{tabular}{@{}l l r r r r r r@{}}
\toprule
& & \multicolumn{3}{c}{\textbf{Quality}} 
& \multicolumn{2}{c}{\textbf{Runtime (ms)}} 
& \multicolumn{1}{c}{\textbf{Speedup}} \\
\cmidrule(lr){3-5}
\cmidrule(lr){6-7}
\cmidrule(l){8-8}
\textbf{Problem} & \textbf{Target distribution}
& $Q_{0}$ & $Q_{64}$ & $\Delta Q{\times}10^{3}$
& $T_{0}$ & $T_{64}$ & $T_{0}/T_{64}$ \\
\midrule

\multirow{3}{*}{Coloring}
& Cluster-ring mix         & $1.0000$ & $1.0000$ & $\phantom{+}0.00$ & $0.101$ & $0.072$ & $1.40\times$ \\
& Planted-palette overlap  & $1.0000$ & $0.9999$ & $-0.08$ & $0.076$ & $0.077$ & $0.99\times$ \\
& Separator-palette trap   & $1.0000$ & $1.0000$ & $\phantom{+}0.00$ & $0.603$ & $0.097$ & $\mathbf{6.23\times}$ \\
\addlinespace[3pt]

\multirow{3}{*}{MDS}
& Gateway-overlap cover    & $1.0000$ & $1.0000$ & $\phantom{+}0.00$ & $0.103$ & $0.102$ & $1.00\times$ \\
& Geometric-cluster cover  & $1.0000$ & $0.9997$ & $-0.33$ & $0.130$ & $0.104$ & $1.26\times$ \\
& Star-cluster cover       & $1.0000$ & $1.0000$ & $\phantom{+}0.00$ & $0.086$ & $0.080$ & $1.08\times$ \\
\addlinespace[3pt]

\multirow{3}{*}{MDKP}
& Decoy-complement         & $0.9990$ & $0.9995$ & $+0.57$ & $90.79$ & $251.35$ & $0.36\times$ \\
& Latent-class             & $0.9993$ & $0.9992$ & $-0.07$ & $116.04$ & $97.13$ & $1.19\times$ \\
& Single-resource          & $1.0000$ & $1.0000$ & $\phantom{+}0.00$ & $7.92$ & $0.22$ & $\mathbf{35.95\times}$ \\
\addlinespace[3pt]

\multirow{3}{*}{TSP}
& Clustered Euclidean      & $1.0000$ & $1.0000$ & $\phantom{+}0.00$ & $6.08$ & $1.60$ & $3.80\times$ \\
& Latent-metric            & $1.0000$ & $1.0000$ & $\phantom{+}0.00$ & $6.11$ & $4.31$ & $1.42\times$ \\
& Paired-ribbon zigzag     & $1.0000$ & $1.0000$ & $\phantom{+}0.00$ & $0.65$ & $0.04$ & $\mathbf{17.53\times}$ \\
\bottomrule
\end{tabular}
\end{adjustbox}
\caption{
\textbf{Zero-sample versus sample-conditioned synthesis.}
$Q_0,T_0$ denote quality and runtime for solvers synthesized without public
training instances, while $Q_{64},T_{64}$ use the default $n=64$ sample
instances. Values are means over $10$ paired synthesis seeds. The column
$\Delta Q{\times}10^3$ reports $10^3(Q_{64}-Q_0)$, and
$T_0/T_{64}>1$ indicates that sample conditioning produced a faster deployed
solver. Quality is nearly saturated across targets, so the main effect of
sample conditioning is on runtime rather than solution quality.
}
\label{tab:with_without_data_all}
\end{table}

\subsection{Diagnostic Traces for Synthesized Solvers}
\label{app:solver_diagnostics}

The previous section described the distribution-specific computations the
generated solvers use. To see which of those computations are actually exercised
at test time, we log four behavioral traces during held-out evaluation. A
\emph{shortcut} fires when the solver takes its learned fast path---template
checking in Coloring, seeded assignment in MAXSAT, density sorting in Packing LP,
surrogate scoring or restricted DP in MDKP, structured tour construction in TSP,
and so on; its concrete meaning is therefore problem-dependent. \emph{Fallback}
records routing to a generic safety routine, where one exists. \emph{Residual
size} is the solver-reported size of the subproblem left after specialization,
meaningful only within a problem family since its units differ across problems.
\emph{Repair iterations} counts the bounded local repair applied after the
initial construction. These are observational traces, not intervention ablations:
we do not disable any mechanism and rerun, but report the deployed solver's
behavior under the same protocol as the headline results, averaging within each
target and then across the three targets per family.

\begin{table}[t]
\centering
\small
\setlength{\tabcolsep}{6pt}
\renewcommand{\arraystretch}{1.2}
\begin{adjustbox}{max width=\linewidth,center}
\begin{tabular}{@{}l r r r r r r@{}}
\toprule
& \multicolumn{2}{c}{\textbf{Performance}}
& \multicolumn{2}{c}{\textbf{Control flow}}
& \multicolumn{2}{c}{\textbf{Residual repair}} \\
\cmidrule(lr){2-3}
\cmidrule(lr){4-5}
\cmidrule(l){6-7}
\textbf{Problem family}
& $Q_{\rm LLM}$
& $T_{\rm LLM}$ (ms)
& Shortcut rate
& Fallback rate
& Residual size
& Repair iters. \\
\midrule
Coloring    & $0.868$ & $14.6$  & $67.4\%$  & $0.0\%$  & $0.00$   & $0.00$ \\
MAXSAT      & $1.000$ & $55.9$  & $88.1\%$  & $7.4\%$  & $26.24$  & $0.00$ \\
MIS         & $0.992$ & $32.1$  & $66.7\%$  & $0.0\%$  & $235.92$ & $0.33$ \\
MDS         & $0.973$ & $75.6$  & $86.4\%$  & $13.6\%$ & $441.72$ & $0.00$ \\
Packing LP  & $0.994$ & $32.0$  & $100.0\%$ & $0.0\%$  & $346.78$ & $0.00$ \\
MDKP        & $0.973$ & $123.5$ & $100.0\%$ & $0.0\%$  & $321.03$ & $2.67$ \\
TSP         & $0.993$ & $100.3$ & $100.0\%$ & $0.0\%$  & $906.74$ & $0.00$ \\
\midrule
\textbf{All $21$}
& $\mathbf{0.971}$ & $\mathbf{62.0}$
& $\mathbf{86.9\%}$ & $\mathbf{3.0\%}$
& $\mathbf{325.49}$ & $\mathbf{0.43}$ \\
\bottomrule
\end{tabular}
\end{adjustbox}
\caption{
\textbf{Diagnostic traces for the synthesized solvers over held-out test instances.}
$Q_{\rm LLM}$ and $T_{\rm LLM}$ are mean normalized quality and per-instance
wall-clock runtime. \emph{Shortcut rate} is the fraction of instances on which
the learned fast path fires, and \emph{fallback rate} is the fraction routed to
a generic safety routine. \emph{Residual size} is the post-specialization
subproblem size, which is comparable only within a problem family, and
\emph{repair iters.}\ is the mean bounded-repair count. Values are averaged
within each target, then across the three targets per family, and across all
$21$ targets in the final row.
}
\label{tab:solver_diagnostics_full}
\end{table}

The dominant pattern is the learned fast path: the mean shortcut rate is $86.9\%$
and reaches $100\%$ for Packing LP, MDKP, and TSP. This directly supports the
mechanism account of Table~\ref{tab:dac_all_distributions}---the solvers usually
replace generic search with a distribution-specific computation, even though the
concrete fast path differs by family (density rules for Packing LP, surrogate
scoring or restricted DP for MDKP, structured tour construction for TSP).

Fallback is rare ($3.0\%$ on average), so the speedups are not obtained by
quietly calling a generic backend on most instances; fallback instead covers the
slices of a distribution the learned hint misses. Its two highest rates, MDS
($13.6\%$) and MAXSAT ($7.4\%$), match the picture of solvers built on useful but
incomplete structure---hub/anchor rules and seeded local search with bounded
cleanup. Repair is likewise limited ($0.43$ on average, near-zero for most
families); the exception is MDKP ($2.67$), where bounded add/drop and feasibility
repair are an explicit part of the packing strategy, making MDKP repair-mediated
where Packing LP and TSP are fast-path-mediated. Residual size gives a
complementary within-family view: where it is large (e.g.\ TSP, MDS), the solver
reduces the instance to a structured residual before enumerating, repairing, or
calling a backend on it.

Together these traces sharpen the mechanistic reading while marking its limits.
The synthesized solvers are neither faster reimplementations of the same generic
algorithms nor mostly wrappers around exact solvers: they predominantly run a
learned distribution-specific fast path, fall back only as a safety mechanism,
and repair only when the strategy leaves small residual choices.

\subsection{Perturbation Robustness Ablation}
\label{app:perturbation_ablation}

As a diagnostic for presentation dependence, we relabel the vertices of each
graph instance with a random permutation. This preserves the graph up to
isomorphism but changes the vertex identifiers and input order the solver sees,
so a solver that relies only on isomorphism-invariant structure should behave
similarly after relabeling, while one that latches onto incidental identifiers or
ordering may not. We evaluate each selected solver on the original held-out test
instances and on relabeled copies, reporting original and perturbed quality
($Q_{\rm orig}$, $Q_{\rm pert}$), their difference
$\Delta Q = Q_{\rm pert}-Q_{\rm orig}$, the fraction of instances whose quality,
optimality, or feasibility status changes, and the runtime ratio
$t_{\rm pert}/t_{\rm orig}$ (geometric mean, since runtime effects are
multiplicative). Relabeling is a post-hoc diagnostic only and is never used
during solver selection.

\begin{table}[t]
\centering
\small
\setlength{\tabcolsep}{6pt}
\renewcommand{\arraystretch}{1.2}
\begin{adjustbox}{max width=\linewidth,center}
\begin{tabular}{@{}l c r r r r r r r@{}}
\toprule
& & \multicolumn{3}{c}{\textbf{Quality}}
& \multicolumn{3}{c}{\textbf{Change rate}}
& \multicolumn{1}{c}{\textbf{Runtime}} \\
\cmidrule(lr){3-5}
\cmidrule(lr){6-8}
\cmidrule(l){9-9}
\textbf{Problem} & \textbf{Targets}
& $Q_{\rm orig}$ & $Q_{\rm pert}$ & $\Delta Q$
& Qual. changed & Opt. changed & Feas. changed & Ratio \\
\midrule
Coloring & $3$ & $0.868$ & $0.791$ & $-0.077$ & $0.342$ & $0.342$ & $0.000$ & $1.38\times$ \\
MDS      & $3$ & $0.973$ & $0.819$ & $-0.155$ & $0.419$ & $0.333$ & $0.000$ & $1.39\times$ \\
MIS      & $3$ & $0.992$ & $0.992$ & $-0.001$ & $0.258$ & $0.191$ & $0.000$ & $1.31\times$ \\
\midrule
\textbf{All} & $9$
& $\mathbf{0.945}$ & $\mathbf{0.867}$ & $\mathbf{-0.077}$
& $\mathbf{0.340}$ & $\mathbf{0.289}$ & $\mathbf{0.000}$
& $\mathbf{1.36\times}$ \\
\bottomrule
\end{tabular}
\end{adjustbox}
\caption{
\textbf{Aggregate graph-relabeling perturbation ablation.}
Means over the three target distributions in each graph problem, and over all
nine targets in the final row. The columns ``Qual. changed,'' ``Opt. changed,''
and ``Feas. changed'' report the fraction of instances whose quality, objective
value, or feasibility status changes under relabeling, respectively. The runtime
ratio is the geometric mean of $t_{\rm pert}/t_{\rm orig}$. Feasibility is
invariant throughout.
}
\label{tab:perturbation_ablation_aggregate}
\end{table}

Feasibility is fully stable: the feasibility-changed fraction is zero on every
target, so the solvers continue to return feasible solutions under an isomorphic
presentation. Quality is more mixed---the nine-target mean falls from $0.945$ to
$0.867$, but the drop is concentrated in just two targets, Ring-template
(Coloring) and Geometric-anchor (MDS); the other seven are nearly unchanged. This
indicates that most solvers capture reusable distribution-specific structure,
while a few also lean on presentation-specific regularities. Runtime is
presentation-sensitive in the same uneven way: the geometric-mean slowdown is
$1.36\times$, but Ring-template slows by $9.65\times$ and Core-fringe trap by
$2.19\times$ despite unchanged quality---so a solver can preserve feasibility and
quality yet still take a different, slower execution path under relabeling.

\begin{table}[t]
\centering
\small
\setlength{\tabcolsep}{5pt}
\renewcommand{\arraystretch}{1.2}
\begin{adjustbox}{max width=\linewidth,center}
\begin{tabular}{@{}l l r r r r r r r@{}}
\toprule
& & \multicolumn{3}{c}{\textbf{Quality}}
& \multicolumn{3}{c}{\textbf{Change rate}}
& \multicolumn{1}{c}{\textbf{Runtime}} \\
\cmidrule(lr){3-5}
\cmidrule(lr){6-8}
\cmidrule(l){9-9}
\textbf{Problem} & \textbf{Target distribution}
& $Q_{\rm orig}$ & $Q_{\rm pert}$ & $\Delta Q$
& Qual. changed & Opt. changed & Feas. changed & Ratio \\
\midrule
\multirow{3}{*}{Coloring}
& Ring-template        & $1.000$ & $0.772$ & $-0.228$ & $1.000$ & $1.000$ & $0.000$ & $\mathbf{9.65\times}$ \\
& Overlapping-palette  & $0.804$ & $0.801$ & $-0.004$ & $0.026$ & $0.026$ & $0.000$ & $0.28\times$ \\
& Separator-trap       & $0.800$ & $0.800$ & $\phantom{-}0.000$ & $0.000$ & $0.000$ & $0.000$ & $0.98\times$ \\
\addlinespace[3pt]
\multirow{3}{*}{MDS}
& Gateway-hub          & $0.920$ & $0.920$ & $\phantom{-}0.000$ & $0.256$ & $0.000$ & $0.000$ & $1.10\times$ \\
& Geometric-anchor     & $1.000$ & $0.536$ & $-0.464$ & $1.000$ & $1.000$ & $0.000$ & $2.18\times$ \\
& Star-kernel          & $1.000$ & $1.000$ & $\phantom{-}0.000$ & $0.000$ & $0.000$ & $0.000$ & $1.11\times$ \\
\addlinespace[3pt]
\multirow{3}{*}{MIS}
& Clique-path          & $0.991$ & $0.991$ & $-0.001$ & $0.386$ & $0.350$ & $0.000$ & $1.01\times$ \\
& Core-fringe trap     & $1.000$ & $1.000$ & $\phantom{-}0.000$ & $0.000$ & $0.000$ & $0.000$ & $\mathbf{2.19\times}$ \\
& Motif-bridge         & $0.986$ & $0.985$ & $-0.001$ & $0.388$ & $0.224$ & $0.000$ & $1.02\times$ \\
\bottomrule
\end{tabular}
\end{adjustbox}
\caption{
\textbf{Target-level graph-relabeling perturbation ablation.}
Per-target version of Table~\ref{tab:perturbation_ablation_aggregate}.
Each instance is relabeled to an isomorphic copy, changing vertex identifiers
and input order while preserving the underlying graph. Feasibility is unchanged
on every target. The largest quality drops occur on Ring-template and
Geometric-anchor, the only two targets for which quality changes on every
instance; the remaining targets are nearly invariant. This suggests that most
synthesized solvers rely primarily on isomorphism-stable structure rather than
incidental vertex labeling. Bold marks the two largest runtime slowdowns.
}
\label{tab:perturbation_ablation_targets}
\end{table}

In sum, graph relabeling acts as a stress test that separates invariant
distributional structure from presentation-specific shortcuts. The results lean
toward the former---feasibility is invariant and quality holds on most targets---
while the two brittle cases (Ring-template, Geometric-anchor) show that some
selected solvers do exploit incidental vertex identifiers or ordering.

\section{Proofs for Section~\ref{sec:theory}}
\label{app:proofs}

\thmruntimeawaregeneralization*
\begin{proof}
For \(c\in\cC\), write \(e(c):=\Err_D(c)\). If
\(\widehat\Err_S(c)=0\), then
\(\Pr[\widehat\Err_S(c)=0]=(1-e(c))^n\le e^{-ne(c)}\). Hence
\[
\Pr\!\left(
\widehat\Err_S(c)=0
\;\text{and}\;
e(c)>\frac{\Gamma(c)+\log(2/\delta)}{n}
\right)
\le
\pi(c)\frac{\delta}{2}.
\]
A union bound gives, with probability at least \(1-\delta/2\),
\[
\Err_D(c)\le \frac{\Gamma(c)+\log(2/\delta)}{n}
\quad
\text{for all }c\in\cC\text{ with }\widehat\Err_S(c)=0.
\]
Since \(\widehat c_S\) is sample-consistent, this proves the error bound.

For runtime, set
\[
r(c):=\Time_{\max}
\sqrt{\frac{\Gamma(c)+\log(4/\delta)}{2n}}.
\]
By Hoeffding's inequality and another union bound, with probability at least
\(1-\delta/2\),
\[
|\widehat\Run_S(c)-\Run_D(c)|\le r(c)
\quad\text{for all }c\in\cC.
\]
On the intersection of the two events, fix any \(c\in\cC^{\rm feas}\). Since
\(\Err_D(c)=0\), \(c\) is sample-consistent almost surely, so the selection rule
gives \(\widehat\Run_S(\widehat c_S)\le \widehat\Run_S(c)\). Therefore
\[
\Run_D(\widehat c_S)
\le
\widehat\Run_S(\widehat c_S)+r(\widehat c_S)
\le
\widehat\Run_S(c)+r(\widehat c_S)
\le
\Run_D(c)+r(c)+r(\widehat c_S).
\]
Taking the infimum over \(c\in\cC^{\rm feas}\) and using
\[
r(c)+r(\widehat c_S)
\le
2\Time_{\max}
\sqrt{
\frac{
\max\{\Gamma(\widehat c_S),\Gamma(c)\}+\log(4/\delta)
}{2n}
}
\]
gives the stated runtime bound. The two events together hold with probability
at least \(1-\delta\).
\end{proof}

\thmthreewayefficiency*
\begin{proof}
Fix the true hint $h^\star\in\cH$, and for each $h\in\cH$ let $\mu(h):=\E_{X\sim D_{h^\star}}[\psi_h(X)]$. By assumption, $\mu(h^\star)\ge \mu(g)+\gamma$ for every $g\neq h^\star$. Since $\psi_h(X)\in[0,1]$, Hoeffding's inequality gives $\Pr(|\widehat\mu_S(h)-\mu(h)|>\gamma/2)\le 2e^{-n\gamma^2/2}$ for each $h$, and a union bound over $\cH$ yields
\[
\Pr\!\left(\exists h\in\cH:\ \bigl|\widehat\mu_S(h)-\mu(h)\bigr|>\tfrac{\gamma}{2}\right)\le 2|\cH|e^{-n\gamma^2/2}.
\]
For $n\ge \tfrac{2}{\gamma^2}\log\tfrac{2|\cH|}{\delta}$, the right-hand side is at most $\delta$, so with probability $1-\delta$ we have $|\widehat\mu_S(h)-\mu(h)|\le\gamma/2$ for all $h\in\cH$. On this event, for every $g\neq h^\star$,
\[
\widehat\mu_S(h^\star)\ge \mu(h^\star)-\tfrac{\gamma}{2}\ge \mu(g)+\tfrac{\gamma}{2}\ge \widehat\mu_S(g),
\]
so $\hat h=h^\star$.
\end{proof}

\thmbackdoorsat*
\begin{proof}
Set $\varepsilon:=\gamma/4$. For each variable $i\in[d]$, the random variables $\sigma_i(F^{(1)}),\dots,\sigma_i(F^{(m)})$ are IID and lie in $[0,1]$, with mean $\mu_i:=\E_{F\sim D_B}[\sigma_i(F)]$. By Hoeffding's inequality,
\[
\Pr\!\left(|\widehat\sigma_i-\mu_i|>\varepsilon\right)\le 2e^{-2m\varepsilon^2}.
\]
A union bound over all $i\in[d]$ yields
\[
\Pr\!\left(\max_{1\le i\le d} |\widehat\sigma_i-\mu_i|>\varepsilon\right)\le 2d\,e^{-2m\varepsilon^2}.
\]
Therefore, if $m\ge \frac{8}{\gamma^2}\log\frac{2d}{\delta}$, then with probability at least $1-\delta$ we have $|\widehat\sigma_i-\mu_i|\le \varepsilon$ for all $i\in[d]$.

On this event, if $i\in B$ then $\widehat\sigma_i\ge q_1-\varepsilon$, while if $j\notin B$ then $\widehat\sigma_j\le q_0+\varepsilon$. Since $\gamma=q_1-q_0$ and $\varepsilon=\gamma/4$, we get $q_1-\varepsilon>q_0+\varepsilon$. Hence every variable in $B$ has strictly larger empirical score than every variable outside $B$, so the top $k$ empirical scores are attained exactly on the variables in $B$. Therefore $\widehat B=B$.
\end{proof}


\end{document}